\newtheorem{theorem}{Theorem}
\newtheorem{lemma}[theorem]{Lemma}
\newtheorem{prop}[theorem]{Proposition}
\newtheorem{cor}{Corollary}[theorem]
\definecolor{darkgreen}{rgb}{0.0, 0.5, 0.0}
\title{Decom-Renorm-Merge: Model Merging on the Right Space Improves Multitasking}
\author{
  Yuatyong Chaichana \\
  Chulalongkorn University \\
  \And
  Thanapat Trachu \\
  Chulalongkorn University \\
  \And
  Peerat Limkonchotiwat \\
  AI Singapore \\
  \AND
  Konpat Preechakul \\
  UC Berkeley \\
  \And
  Tirasan Khandhawit \\
  Mahidol University \\
  \And
  Ekapol Chuangsuwanich\thanks{Correspondence to \texttt{{ekapol.c@chula.ac.th}}.} \\
  Chulalongkorn University \\
}
\begin{document}

\maketitle
\setcounter{footnote}{0}

\begin{abstract}
In the era of large-scale training, model merging has evolved into a tool for creating multitasking models efficiently. It enables the knowledge of models to be fused, without the need for heavy computation as required in traditional multitask learning. Existing merging methods often assume that entries at identical positions in weight matrices serve the same function, enabling straightforward entry-wise comparison and merging. However, this assumption overlooks the complexity of finetuned neural networks, where neurons may develop distinct feature compositions, making direct entry-wise merging problematic. We present \textbf{Decom-Renorm-Merge (DRM)}, a simple yet effective approach that leverages Singular Value Decomposition to decompose and coordinate weight matrices into an aligned joint space, where entry-wise merging becomes possible. We showcase the effectiveness of DRM across various settings ranging from smaller encoder-based such as ViT and DeBERTa, encoder-decoder-based such as T5, and larger decoder-based such as Llama3.1-8B. Our experimental results show that DRM outperforms several state-of-the-art merging techniques across full finetuning and low-rank adaptation settings. Moreover, our analysis reveals renormalization as the crucial component for creating a robust and even joint space for merging, significantly contributing to the method's performance.\footnote{Code and models are available at \url{https://github.com/yophis/decom-renorm-merge}.}

\end{abstract}

\section{Introduction}
Multitask learning \citep{multitasklearning} aims to equip machine learning models with the generalist ability of humans---to acquire and perform multiple tasks using a single model.
This capability is best demonstrated in large language models \citep{t5, llama, gpt3, deepseekv3, gemini}, which can simultaneously handle a wide range of tasks requiring language understanding.
However, acquiring high-quality training data—especially for specialized domains such as medicine, law, or finance—is often constrained by proprietary access, licensing, or privacy concerns. Finding the right task and data mixtures requires costly trial and error \citep{modelmerginganddatamixture}, where even a single training run can be a major investment. These challenges have motivated research into model merging as an alternative.

Rather than training a multitask model from scratch, model merging combines independently finetuned checkpoints into a single multitask-capable model, without requiring additional training. This also permits the reuse of an increasing number of open-source finetuned models. \citet{taskarithmetic} introduces the use of \emph{task vector}---defined as the difference between finetuned and base parameters---as a way to perform various arithmetic operations such as addition for task fusion, and subtraction for task forgetting. Many merging techniques adopt task vectors, since they better capture the finetuning dynamic and update direction. Furthermore, it is found that a large portion of task vector entries can be pruned without heavily impacting task-specific performance \citep{tiesmerging}. This sparsity also enhances merging performance by reducing parameter interferences between models.

Parameter interference is identified in TIES Merging \citep{tiesmerging} as a key obstacle to an effective model merging. They demonstrate that two major interferences are redundant parameter values and sign disagreements at corresponding positions in the weight matrices. \citet{tiesmerging} propose to address the issue by first pruning small entries from the task vectors. Then, in each position, one dominant sign is selected, and all other entries with opposite sign are zeroed out. The pruned entries are ignored during the final averaging step.

These interference mitigation operations, employed in TIES \citep{tiesmerging} and its variants \citep{dare}, rely on a crucial assumption: task vector entries at identical positions serve consistent semantic roles and functions. This assumption allows entry-wise comparison, pruning, dominant sign selection, and the subsequent averaging to be done. However, this fundamental assumption may not hold in practice, as: (a) finetuning can lead to changes in feature arrangement of the activations \citep{neuronequivariance, permutationinvariance, gitrebasin}, and (b) a single neuron often carries not just a single, but a composition of several overlapping features, a phenomenon known as polysemanticity \citep{toymodelofsuperposition, polysemanticityandcapacity, multifactedfeature}. Based on these observations, we formulate our central hypothesis: for entry-wise interference reduction strategies to be valid during model merging, task vectors \emph{must first be decomposed and then coordinated into a shared representation space}. 

Following our hypothesis, we propose \textbf{Decom-Renorm-Merge} (DRM), a method that merges models in a shared representation space. Unlike previous approaches that operate on task vectors (task-specific parameter differences flattened into vector form), DRM preserves the matrix structure of updates—referred to as the \textit{weight delta}—which is essential for decomposing into a shared basis. As shown in Figure~\ref{fig:method}, DRM consists of four key steps:
(1) \textbf{Decompose} a concatenated set of weight delta matrices using SVD to extract a shared basis and project each model into this joint space.
(2) \textbf{Renormalize} each model’s projected basis vectors to unit length; this step is crucial for stable merging.
(3) \textbf{Prune:} Keep only the top magnitude entries in each renormalized basis vector matrix to reduce interference.
(4) \textbf{Merge:} Combine the pruned weights across models using the rest of TIES’ interference mitigation techniques~\cite{tiesmerging}.

\begin{figure}[t!]
    \centering
    \includegraphics[width=1.0\linewidth]{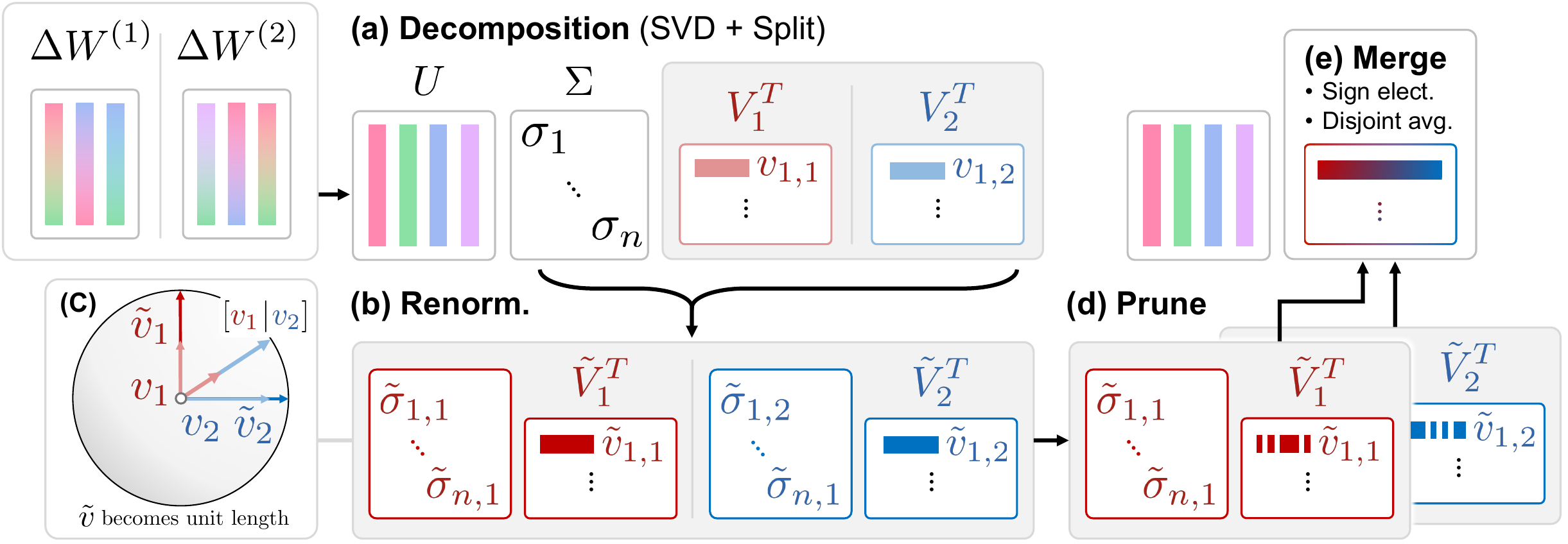}
    \caption{\label{fig:method} \textbf{Decom-Renorm-Merge (DRM)} is a model-merging method for building multitask models. Different models may not share the same weight parameterization. Thus, \textit{merging should occur in a shared decomposed weight space, not the original parameter space}. DRM merges models' \textit{weight deltas} $\Delta W^{(t)}$---the difference of each finetuned model from a shared base model---into a single merged delta. DRM consists of four main steps: \textbf{(a) Decompose:} Concatenate the $\Delta W^{(t)}$ matrices horizontally (or vertically), then apply SVD to decompose them into a shared basis $U$ and individual weights $V_t$. Although the combined $V$ is orthonormal, each individual split $V_t$ is not. \textbf{(b)~Renormalize} each row vector $v_{t,i}$ of $V_t$ to unit length, and scale the corresponding singular value to preserve the magnitude. This can be viewed as compensating for $V_t$’s non-orthonormality as illustrated in \textbf{(c)}. \textbf{(d)~Prune} each renormalized individual singular vector matrix $\tilde{V}_{t}$ by keeping only the entries within top-$k$\% magnitudes. \textbf{(e)~Merge} the pruned singular vector matrices across models using sign election and disjoint averaging.
    }
    \vspace{-1.5em}
\end{figure}

We demonstrate the effectiveness of DRM across various setups, including: (1) Diverse model architectures and sizes. Ranging from smaller encoder-based ViT-B/32, ViT-L/14 \citep{vit, clip}, and DeBERTa-Base \citep{debertav3}; mid-sized, encoder-decoder model T5-Base and T5-Large \citep{t5}; to large decoder-based Llama3.1-8B \citep{llama3}. (2) Comprehensive task categories and modalities, including image recognition and natural language understanding. (3) With and without validation set for merging techniques' hyperparameter tuning. Notably, even with no validation set available, DRM outperforms the strongest baselines by $5.0\%$ and $1.9\%$ in merging ViT-B/32 and ViT-L/14, respectively, $9.3\%$ on DeBERTa-Base-v3, and $1.9\%$ on low-rank (LoRA) adapted Llama3.1-8B, on absolute scales. Additionally, we demonstrate that DRM has better performance retention as the number of tasks being merged increases, compared to other baselines.

Through detailed analysis of the aligned joint merging spaces, we uncover that naively performing interference mitigation and merging operations in the joint space obtained from SVD yields surprisingly minimal improvements. Instead, we identify \emph{renormalization} as the crucial enabling component of our method. The technique involves normalizing the split singular vectors back to unit length after they lose this property due to the splitting. Without this key ingredient, the singular vector bases demonstrate significant instability, leading to an erratic pruning outcome where some top principle components are entirely zeroed out, causing severe performance degradation.

In particular, our work makes the following main contributions:
\begin{itemize}[leftmargin=*]    
    \item We propose \textbf{Decom-Renorm-Merge (DRM)}, a novel model merging technique that constructs a shared representation space for more effective model merging. 
    \item We demonstrate \textbf{state-of-the-art performance} across both vision and language domains, including CLIP’s vision encoders ViT-B/32 and ViT-L/14, DeBERTa-Large-V3, and LoRA-finetuned LLaMA 3.1-8B, showcasing the effectiveness of our method in both full and parameter-efficient settings.
    \item We identify the critical role of renormalization (Section~\ref{sec:understanding the decompose-renormalized joint space}) in enabling stable and robust merging within the shared decomposed weight space, through a series of empirical and theoretical analyses.
\end{itemize}

\section{Related Works}
\label{sec:relatedworks}

\paragraph{Model merging and weight combination.}
Early research on model merging starts by means of averaging together different finetuned weights derived from identical model architectures and tasks, with a goal to improve the final performance of that specific task \citep{modelsoup, stochasticweightaveraging, parallelstochasticweightaveraging}, and generalization ability \citep{ensembleofaverages, swad, modelratatouille, diverseweightaveraging}. Candidate weights for averaging can be obtained either from different optimization steps or entirely different training runs. \citet{fusingforfinetuning} show that the averaged weights can be used as a better initialization point for new task finetuning. Merging was later extended to combining weights of different tasks, creating a single multitasking model. Pioneered by \citet{fishermerging}, rather than simply averaging the parameters, the previous work weights the averaging using Fisher information matrices, effectively capturing the importance of each parameter. But the Fisher information has to be calculated from the network's gradient. Regression Mean \citep{regmean} frames merging problem as linear regression of each individual linear layer in the model. While a closed-form solution exists, it requires additional activation statistics from the models. Task Arithmetic \citep{taskarithmetic} proposes task vector, which is attained by subtracting the base out from finetuned weight. Inspired by task analogy and regular vector arithmetic, they showcase effectiveness on applying arithmetic operations to the task vectors, such as addition for task merging, and subtraction for task forgetting. \citep{tangenttaskarithmetic} introduce a novel linearized finetuning on the tangent plane, in order to facilitate downstream task vector arithmetic. TIES Merging \citep{tiesmerging} adheres to the use of task vector. They demonstrate that a combination of parameter pruning, and dominant sign election, can resolve parameter interference and significantly enhance merging performance. DARE \citep{dare} displays excessive redundancy within task vectors. They propose random parameter pruning and rescaling as remedies, paving the way for other merging methods to be applied effectively afterwards.

\paragraph{Mode connectivity and weight permutation.}
Even with the complex and highly non-convex nature of the loss surfaces of deep neural networks, it was found that there exists paths, possibly linear, of near-constant low loss between independently optimized large models, a phenomenon called mode connectivity \citep{topologyofneuralnetworkoptimization, modeconnectivity, modeconnectivity2, linearmodeconnectivityandlottery}. Based on these findings, a body of works \citep{permutationinvariance, modeconnectivityandneuronalignment} argue the probable existence of weight permutation symmetry in neural network. Namely, two differently trained models can be permuted and aligned into a same convex basin, in which there is a linearly interconnecting path. This subsequently allows the permuted networks to be merged simply via weight averaging \citep{optimaltransportpermutation, graphmatchingpermutation, gitrebasin, repairpermutation, sinkhornpermutation, zipit}.

\section{Motivation: Merging models on the right space}

\paragraph{Problem settings.}
Our goal is to merge several finetuned models, each derived from the same pretrained base, into a single multitask model while maintaining maximum performance across all merged tasks, \underline{without} involving any forward-backward calculation or architectural change.

Concretely, given a pretrained model with $L$ linear layers, let $ W_l^\text{PT} \in \mathbb{R}^{d_{out}^{(l)} \times d_{in}^{(l)}} $ denote the weight matrix of the $l$-th layer. Here, $d_{in}^{(l)}$ and $d_{out}^{(l)}$ represent the input and output dimensions of layer $l$, respectively. We define $ \mathbb{W}^\text{PT} = \{ W_l^\text{PT} \mid l \in [1, L] \} $ as the set of weights of the pretrained model. We can perform either full or parameter-efficient finetuning on the base model towards $ N $ downstream tasks, obtaining task-specific models $ \{ \mathbb{W}^{(1)},\mathbb{W}^{(2)}, \dots, \mathbb{W}^{(N)} \} $. 

The \textbf{objective} is to combine the finetuned models into a single multitask-capable model $ \mathbb{W}^\text{M} $ that performs well on all of the $ N $ tasks on average. In our settings, we consider merging the \emph{weight deltas}, which represent the parameter difference from the base model in matrix form; defined as $ \Delta W^{(t)}_l = W^{(t)}_l - W^\text{PT}_l$, for each task $t$ and layer $l$, instead of merging the model weights directly.

\subsection{Different Finetuned Models may not Share the Same Weight Parameterization}
Finetuned models typically remain close to the base pretrained model in parameter space—resulting in small \textit{weight deltas} \citep{finetuningfineness}—this might suggest that merging such models, which seemingly share similar weight configurations and semantics, should be straightforward. However, our argument is that \emph{coordinating the weights before merging is essential}. This is because, despite the small magnitude and proximity of the weight deltas, the arrangement of features within them can still diverge across tasks for the following reasons:

\paragraph{Finetuning can lead to change in contextualization.}
Weights in the hidden layers are contextualized by the adjacent layers, in a way that their feature configurations and orderings have to be in accordance. But as parameter updates from finetuning propagated into the network, it leads to changes in activations, allowing alteration to the contextualization and thus feature arrangement.

\paragraph{The optimization path is non-unique in parameter space.}
Although, under the infinite width assumption, neural networks follow a deterministic trajectory in \textit{functional space} during training \citep{NTK}, this does not guarantee a unique solution in \textit{parameter space}. 
In fact, different model parameterizations may lead to functionally equivalent models. These weight symmetries are supported by empirical findings in weight permutation studies \citep{neuronequivariance, wideanddeep, modeconnectivityandneuronalignment, permutationinvariance, gitrebasin, repairpermutation, zipit}, which suggest that even under the same pretrained feature space, functionally similar features can be manipulated into different arrangements throughout the course of finetuning.

\subsection{Neuron Polysemanticity in Neural Networks}
\label{sec:neuronpolysemanticity}
Artificial neurons are often assumed to fire when recognizing a single specific feature it learned. But recent findings demonstrate that these neurons are often polysemantic \citep{introtocircuits, circuitsininceptionv1, toymodelofsuperposition, polysemanticityandcapacity, multifactedfeature}, responding to multiple seemingly unrelated concepts \citep{cnnfeaturevisualization}. With a single neuron carrying a composition of features, \emph{weight delta decomposition is required} to enable further interpretation and comparison of their underlying representations \citep{representationlearning, betavae, infogan, disentanglebyfactorization, solu}. In the rest of this work, we use the term \emph{feature} and \emph{representation} interchangeably.

\section{Proposed Method: Decom-Renorm-Merge}
\label{sec:proposedmethod}

\subsection{Hypothesis}
\label{sec:hypothesis}

Building on the above observations, we know that weight delta entries at identical positions do not necessarily encode consistent semantics or functionalities. This invalidates direct entry-wise interference reduction of the weight delta matrices. Therefore, we make the following hypothesis: weight deltas of differently adapted models \emph{must be decomposed and then coordinated into an identical representation space}, before proper interference reduced merging can be carried out.

\subsection{Algorithm}
\label{sec:algorithm}
We propose \textbf{Decom-Renorm-Merge (DRM)} which comprises (1) shared decomposition via SVD, for aligning different weight deltas into a joint representation space, (2) per-task renormalization for stabilizing each individual basis and enabling downstream interference reductions, (3) TIES' operations applied in the Decompose-renormalized joint space, including pruning, sign election, and disjoint averaging. See Appendix \ref{sec: Analysis of Merging Models through Different Decompositions} for analysis on different choice of decompositions.

Concretely, our method consists of the following steps:

\textbf{1) Joint Decomposition:} For each linear layer $l$, we concatenate the weight deltas $\{\Delta W_l^{(t)}\}_{t=1}^N$ horizontally across $N$ tasks, and apply SVD,
\begin{align}
    \Delta W_l^{\text{stack}} &= [ \Delta W_l^{(1)} \; \Delta W_l^{(2)} \; \cdots \; \Delta W_l^{(N)} ] \\
    &= U \Sigma V^T
\end{align}
where $U \in \mathbb{R}^{d_{out}^{(l)} \times d_{out}^{(l)}}$ is the shared left singular vector matrix (representing joint output feature basis), $\Sigma \in \mathbb{R}^{d_{out}^{(l)} \times Nd_{in}^{(l)}}$ contains singular values (capturing each basis direction's shared importance across task), and $V^T \in \mathbb{R}^{Nd_{in}^{(l)} \times Nd_{in}^{(l)}}$ is the right singular vector matrix.

\textbf{2) Partition:} We partition the full right singular vector matrix $ V^T $ into task-specific blocks,
    \begin{align}
        \Delta W_l^{\text{stack}} &= U \Sigma [ V_1^T \; V_2^T \; \cdots \; V_N^T ]
    \end{align}
    where each $V^T_t \in \mathbb{R}^{Nd_{in}^{(l)} \times d_{in}^{(l)}}$ corresponds to task $t$. Note that each task's original weight delta can be reconstructed by,
    \begin{align}
        \Delta W_l^{(t)} = U \Sigma V^T_t
    \end{align}
    
\textbf{3) Renormalization.} The rows of $V^T_t$ are generally not of unit lengths due to the partitioning. For each row $ {v}_{t,i} $, we renormalize the vector,
    \begin{align}
        \tilde{v}_{t,i} &= \frac{v_{t,i}}{||v_{t,i}||_2}
    \end{align}
    Collectively, this yields a renormalized matrix $\tilde{V}_t^T$. The original row norms are redistributed into the singular values,
    \begin{align}
        \Sigma_{t} = \Sigma \widehat{V}_t
    \end{align}
    where $\widehat{V}_t$ is a diagonal matrix with entries $\|v_{t,i}\|_2$ on the diagonal, and $\Sigma_{t}$ is now the singular value matrix of the renormalized task $t$ specifically.

\textbf{4) Pruning:} For each task $t$, we keep only the top-$k\%$ largest-magnitude entries in $\tilde{V}^T_t$, and zero out the others. It is vital that this step be done after renormalization to ensure stability, pruning on unnormalized magnitudes could distort the basis vectors. We default this to top-$20\%$ \citep{tiesmerging}.

\textbf{5) Sign Election:} To resolve inter-task sign conflicts, we compute a sign mask, $S= \operatorname{sgn}(\sum_{t=1}^N \Sigma_t \tilde{V}_t^T) $, selecting a magnitude dominant sign for each position. Then, we zero out entries with signs opposite to the elected mask. This procedure is done in scale-aware fashion by multiplying the $\Sigma_{t}$ in.

\textbf{6) Disjoint Averaging:} We compute the disjoint average of the processed $ \Sigma_t \tilde{V}_t^T $, considering only non-zero entries, 
\begin{align}
    (\Sigma\tilde{V}^T)^{\text{M}} = \Gamma \odot \sum_{t=1}^N \lambda_t \Sigma_t \tilde{V}_t^T
\end{align}
Here, $\lambda_t$ is a task-specific scalar weighting coefficient, with a default $\lambda = 1.0$ for all tasks. $\Gamma $ is an entry-wise averaging reciprocal mask that ensures averaging procedure is performed only over non-zero entries. We define $\Gamma \in \mathbb{R}^{m \times n}$ with entry at row $i$, column $j$ as the following reciprocal:
\begin{align}
    \label{eq:disjoint_mask}
    \Gamma_{i,j} = \frac{1}{\sum_{t=1}^{N} \mathbbm{1} (\Delta W_{t,i,j} \ne 0 )}
\end{align}

\textbf{7) Reconstruction:} We transform the merged representation back into the original parameter space,
\begin{align}
    \Delta W_l^\text{M} &= U(\Sigma \tilde{V}^T)^\text{M}
\end{align}
and recover the full merged weight,
\begin{align}
    W_l^\text{M} = W_l^\text{PT} + \Delta W_l^\text{M}
\end{align}

For other parameters such as bias terms and affine weights, weighted averaging is applied using the weights $\lambda_t$. Ultimately, the same algorithm can also be carried out vertically through $U_t$, as for aligning into joint row basis instead. We denote the horizontal stacking version of our algorithm \emph{DRM-H}, and the vertical counterpart \emph{DRM-V}. We demonstrate their differences in the subsequent sections.

\section{Experiments}
\label{sec:experiments}
In this section, we test our proposed merging method experimentally. We directly evaluate the merging performance on different model architectures, tasks, and finetuning approaches. We then test the method's sensitivity and ablate the significance of our algorithmic component.

\subsection{Baselines}
For comparison, we replicate and evaluate our methods against the following four methods:
\begin{enumerate}[leftmargin=*]
    \item \textbf{Simple Averaging} compute the average of every model's weights, $ W_l^\text{M} = \frac{1}{N} \sum^N_{t=1} W_l^{(t)} $.
    
    \item \textbf{Task Arithmetic} \citep{taskarithmetic} performs a weighted averaging on the weight deltas, $ W_l^\text{M} = W_l^\text{PT} + \sum^N_{t=1} \lambda_i \Delta W_l^{(t)} $. We follow \citep{tiesmerging} and set the default to a shared $ \lambda = 0.4$.
    
    \item \textbf{TIES Merging} \citep{tiesmerging} first prunes the low magnitude weights. Then, reduces sign conflicts by electing a dominant magnitude sign for each parameter location, and keep only parameters with the same sign as the elected. Afterwards, only the kept parameters are averaged. As in the original paper, we set the default to retaining top-$20\%$, and a shared $ \lambda $ of 1.0.
    
    \item \textbf{DARE-TIES} \citep{dare} extends TIES Merging, by replacing magnitude pruning with a random drop of rate $p$ and a rescale with $\frac{1}{1-p}$. We again set the default to retaining top-20\%, and a shared $ \lambda $ of 1.0.
    
\end{enumerate}

\subsection{Results}

\begin{table}[t]
    \caption{Merging results of vision and language models, with and with no validation set available. We report all results in classification accuracy. \textbf{Bold} represents the best score, and \underline{underline} represents the second best.}
    \centering
    \begin{tabular}{lcccccc}
        \toprule
        && \multicolumn{2}{c}{\textbf{Vision Models}} & \multicolumn{3}{c}{\textbf{Language Models}} \\
        \cmidrule(lr){3-4}
        \cmidrule(lr){5-7}
        \textbf{Methods} & \textbf{Validation} & \textbf{ViT-B/32} & \textbf{ViT-L/14} & \textbf{DeB-Base} & \textbf{T5-Base} & \textbf{T5-Large}  \\
        \midrule
        Finetune & & 90.8 & 94.3 & 83.4 & 85.1 & 89.3 \\
        \midrule
        Simple Averaging & $\times$ & 66.1 & 79.6 & \underline{57.6} & 63.3 & 73.2 \\
        Task Arithmetic & $\times$ & 60.6 & 83.4 & 55.4 & \textbf{76.4} & \textbf{80.1} \\
        TIES Merging & $\times$ & 72.5 & 86.1 & 48.7 & 71.8 & 72.3 \\
        Dare-TIES & $\times$ & 71.1 & 85.9 & 47.3 & 72.3 & 72.3 \\
        \rowcolor[HTML]{E3ECFA}
        DRM-H & $\times$ & \textbf{77.5} & \textbf{88.0} & \textbf{66.9} & \underline{73.6} & \underline{74.5} \\
        \rowcolor[HTML]{F0E3FA}
        DRM-V & $\times$ & \underline{72.8} & \underline{86.3} & 54.1 & 72.8 & 73.3 \\
        \midrule
        Task Arithmetic & \checkmark & 70.3 & 84.6 & 58.1 & \textbf{76.4} & \textbf{80.1}  \\
        TIES Merging & \checkmark & 74.3 & \underline{86.9} & 57.2 & 73.7 & \underline{79.5} \\
        Dare-TIES & \checkmark & 74.4 & 86.8 & 57.8 & 74.6 & \underline{79.5} \\
        \rowcolor[HTML]{E3ECFA}
        DRM-H & \checkmark & \textbf{78.3} & \textbf{89.3} & \textbf{67.9} & 74.4 & 75.1 \\
        \rowcolor[HTML]{F0E3FA}
        DRM-V & \checkmark & \underline{74.6} & 86.4 & \underline{61.1} & \underline{75.1} & 77.4 \\
        \bottomrule
    \end{tabular}
    \label{tab:merging_results}
\end{table}

\begin{table}[t]
    \caption{Merging results of low-rank fine-tuned Llama 3.1-8B's across five different tasks with and without validation set scenarios. We report all results in classification accuracy. \textbf{Bold} represents the best score, and \underline{underline} represents the second best. Average performance of finetuned Llama 3.1-8B across every task is 90.5.}
    \setlength{\tabcolsep}{3pt}
    \centering
    \begin{tabular}{lcccccccc}
         \toprule
         && \multicolumn{6}{c}{\textbf{Methods}} \\
         \cmidrule(lr){3-8}
         \textbf{Models} & \textbf{Validation} & Averaging & Task Arithmetic & TIES & Dare-TIES & DRM-H & DRM-V \\
         \midrule
         \multirow{2}{*}{Llama 3.1-8B} & $\times$& \multirow{2}{*}{61.5} & 45.9 & 67.1 & 67.6 & \cellcolor[HTML]{E3ECFA}\textbf{69.5} & \cellcolor[HTML]{F0E3FA}\underline{69.1} \\
         & \checkmark && 66.4 & 69.4 & 68.8 & \cellcolor[HTML]{E3ECFA}\underline{73.3} & \cellcolor[HTML]{F0E3FA}\textbf{74.3} \\
         
         \bottomrule
    \end{tabular}
    \setlength{\tabcolsep}{6pt}
    \label{tab:llm_results}
\end{table}

\paragraph{Merging vision models.} We start with merging of image classification models, following \citep{taskarithmetic}. We use ViT-B/32 and ViT-L/14 \citep{vit} visual encoders of CLIP \citep{clip}, finetune them to eight datasets separately: Cars \citep{stanfordcars}, DTD \citep{dtd}, EuroSAT \citep{eurosat}, GTSRB \citep{gtsrb}, MNIST \citep{mnist}, RESISC45 \citep{resisc45}, SUN397 \citep{sun397}, and SVHN \citep{svhn}. We report the results in Table \ref{tab:merging_results}. Using DRM-H to merge fully finetuned ViT-B/32 and ViT-L/14 yields improvements of $5.0\%$ and $1.9\%$ respectively over the strongest baselines. Tuning on validation sets yields improvements of $3.9\%$ and $2.4\%$ over strongest baselines. Remarkably, DRM-H without hyperparameter tuning outperforms baselines with tuning. For performance breakdown of each task, refer to Appendix \ref{sec:performance breakdown}.

\paragraph{Merging language models.}
For merging of language models, we adopt a similar setting to \citep{regmean} and experiment on encoder-only models: DeBERTa-Large-V3 \citep{deberta}, and encoder-decoder based: T5-Base, and T5-Large \citep{t5}. Following \citep{tiesmerging} closely, we finetune the models to six NLU tasks: question answering (QASC \citep{qasc}, WikiQA \citep{wikiqa}, and QuaRTz \citep{quartz}), paraphrase identification (PAWS \citep{paws}), story completion (Story Cloze \citep{storycloze}), and coreference resolution (Winogrande \citep{winogrande}). Table \ref{tab:merging_results} shows that DRM-H improves the performance by $9.3\%$ in merging DeBERTa-Base-V3. Tuning on validation set improves the margin to $9.8\%$ over the strongest baseline.

\paragraph{Merging LoRA-adapted large language models.}
For merging of large language models, we adapt Llama3.1-8B \citep{llama} to five GLUE tasks individually: MNLI \citep{mnli}, QNLI \citep{glue}, RTE \citep{glue}, COLA \citep{cola}, and SST2 \citep{sst2} For this setup, we perform low-rank adaptation ($r=16$) instead of full finetuning as done previously. Table \ref{tab:llm_results} shows that without validation data (no hyperparameter tuning), DRM-H outperforms other competitors by more than $1.9\%$. In the presence of hyperparameter tuning, DRM-V outperforms the strongest baseline by a margin of $3.9\%$. Notice that, unlike in the previous winning scenarios, where DRM-V is usually subpar or relatively close to DRM-H. Reversed here, DRM-V with hyperparameter tuning surpasses DRM-H by a margin of $1.0\%$. We provide an analysis on this phenomenon in Appendix \ref{sec: connection between v and h}.

\begin{wraptable}{h}{0.5\textwidth}
\vspace{-1.5em}
\setlength\tabcolsep{2pt}
\caption{Comparison of merging results with and without renormalization on DRM-H. We observe that renormalization improves performance of the merged models by a large margin, most prominently in DeBERTa-Base with a $8.8\%$ difference.}
\centering
\vspace{0.7em}
\begin{tabular}{c@{\hskip 4pt}c@{\hskip 5pt}c@{\hskip 5pt}c@{\hskip 5pt}c}
    \toprule
    Renorm & ViT-B/32 & T5-B & DeB-B & Llama-8B \\
    \midrule
    \checkmark & 77.5 & 73.6 & 66.9 & 69.5 \\
    $\times$ & 73.5 & 68.6 & 58.1 & 62.7 \\
    \bottomrule
\end{tabular}
\label{tab:with and without renormalization}
\end{wraptable}

\subsection{Additional Results and Analysis}

\label{sec:additional results}
\paragraph{Merging different number of models.}
In this study, we experiment with varying the number of tasks being merged as done in \citep{taskarithmetic, tiesmerging}. We compare our DRM-H with Simple Averaging, TIES Merging, and DARE-TIES on two models: ViT-B/32 and T5-Base, Each point represents result on a single subset of the tasks, and the solid line represents the average performance for each subset size. Note that we only sample at most 10 distinct combinations for each subset size. From Figure \ref{fig:merging different number of tasks}, we see that at 2 or 3 tasks, performance of most merging techniques stay close together. But as the number of tasks grows, DRM-H maintains highest merging performances, suggesting that DRM-H scales better to a large number of tasks.

\paragraph{Ablation of renormalization.}\label{sec:ablate_renorm}
This experiment investigates the performance impact of renormalization in our merging algorithm. We ablate performing merging, with and without applying renormalization to the joint representation space. Table \ref{tab:with and without renormalization} demonstrates that renormalization provides enormous gain to the merging performance. The improvement ranges from $4.0\%$ in ViT-B/32, $5.0\%$ in T5-Base, $8.8\%$ in DeBERTa-Base, and $6.8\%$ in Llama3.1-8B.

\begin{figure}
    \centering
    \includegraphics[width=0.9\linewidth]{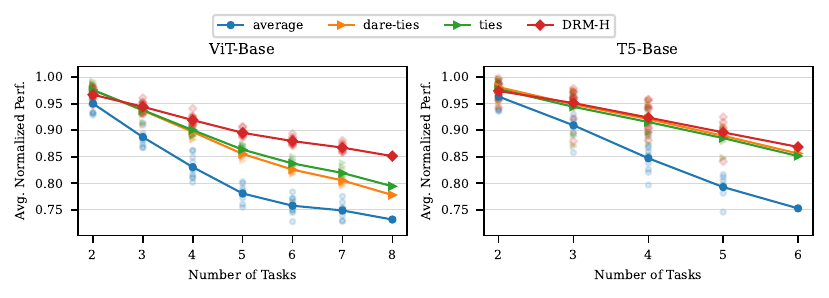}
    \vspace{-1em}
    \caption{Results obtained when merging different number of tasks. DRM-H maintains better performance as the number of merged tasks increased.}
    \label{fig:merging different number of tasks}
    \vspace{-1em}
\end{figure}

\section{Understanding the Decompose-Renormalized Joint Space}
\label{sec:understanding the decompose-renormalized joint space}
Seeing the performance of the DRM, we hope to understand more about our method, particularly the intrinsic nature of the Decompose-renormalized joint space used for the merging. 

We perform a series of studies to answer the following questions: 1) Why is it crucial that magnitude pruning be done on renormalized singular vector matrices? Shouldn't pruning on unrenormalized basis vector matrices be even better? Since they better capture the real scale of each basis entry. 2) How does reducing interference on the Decompose-renormalized space reflect interference on the original parameter space? Does reducing interference in one space directly influence the other? 3) How large is the difference between individual task and the joint decomposed weight delta?

\begin{figure}
    \centering
    \includegraphics[width=1.0\linewidth]{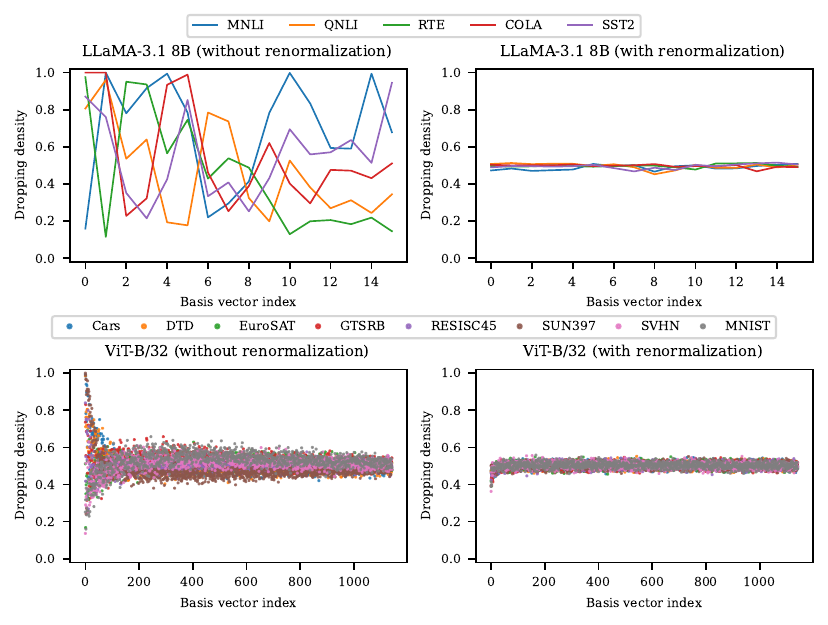}
    \vspace{-1em}
    \caption{
    Percentages of entries being dropped from each row basis vector $v_{t,i}$ during the pruning of entire matrices: \textbf{(Top)} the middle LoRA layer (16th) of Llama-3.1 8B, with and without renormalization. \textbf{(Bottom)} weight deltas of the middle layer (6th) of ViT-B/32, with and without renormalization. Here, we prune $50\%$ of each matrix. The dropped percentages fluctuate severely without renormalization, with some vectors almost entirely zeroed out.
    }
    \label{fig:llama dropping density under renormalization}
    \vspace{-1em}
\end{figure}

\subsection{Why is it Crucial that Weight Pruning be Done on the Renormalized Singular Vector Matrices?}
After partitioning the right singular vector matrix $V^T$ of the concatenated SVD, the row basis vectors lose the orthonormal property. We renormalize each row back to unit norm\footnote{Unlike the unit length property, orthogonality of the split matrices, is not easily reattainable. We observe that enforcing orthogonality through Gram-Schmidt process harshly degrades the performance.} before applying interference reduction techniques. As shown in Section \ref{sec:additional results} that renormalization is central to DRM in terms of performance, here we investigate the effect of renormalization on the pruning behavior, from both empirical and theoretical perspectives.

\paragraph{Empirical investigation.}
Figure~\ref{fig:llama dropping density under renormalization} displays the percentages of entries being dropped from each basis vector. We see that without renormalization, the dropping densities fluctuate severely, with some vectors have almost all of their entries zeroed out. Removing top principle components like this would quickly deteriorate downstream performance. By adding in renormalization, the dropping densities are stabilized evenly across all vectors, bounded to around the configured prune rate of 50\%, no single vector is wholly pruned.

\paragraph{Theoretical investigation.}
The following proposition presents a theoretical explanation for the empirically observed pruning fluctuation when renormalization is \textit{not} applied.
\begin{prop}[Shared Norm Budget of Partitioned Basis Vectors]
\label{prop:pruning_instability}
Let $V^T$ from a horizontally concatenated SVD be partitioned into task-specific blocks, e.g. $V^T = [V_A^T \;\; V_B^T]$ for two tasks. For any given row index $i$, the squared norms of the corresponding row vectors $v_{A,i}^T$ and $v_{B,i}^T$ must satisfy $\|v_{A,i}^T\|_2^2 + \|v_{B,i}^T\|_2^2 = 1$. This creates a "\emph{unit budget}" shared between tasks for each basis direction. Consequently, a larger norm for one task's vector necessitates a smaller norm for the others.
\end{prop}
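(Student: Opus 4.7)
The plan is to reduce the claim to a single structural fact about the SVD: that the rows of $V^T$ are unit-norm vectors. Once that is in hand, the ``budget'' identity follows from nothing more than the Pythagorean decomposition of a vector under a coordinate partition.

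First I would recall that for $\Delta W_l^{\text{stack}} \in \mathbb{R}^{d_{out} \times N d_{in}}$, the SVD produces a matrix $V \in \mathbb{R}^{N d_{in} \times N d_{in}}$ that is orthogonal (or, in the economy form, one whose columns are orthonormal). In either case the columns of $V$, which are precisely the rows of $V^T$, are orthonormal and in particular each has unit Euclidean norm. I would state this as a short lemma or simply as a one-line recollection before starting the main argument.

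Next I would make explicit that the partition in Section~\ref{sec:algorithm} is a column-wise split of $V^T$: writing $V^T = [V_A^T \;\; V_B^T]$ means that for every row index $i$, the $i$-th row of $V^T$ is the concatenation
\begin{equation*}
v_i^T \;=\; \bigl[\, v_{A,i}^T \;\; v_{B,i}^T \,\bigr].
\end{equation*}
Applying $\|[a,b]\|_2^2 = \|a\|_2^2 + \|b\|_2^2$ and using $\|v_i^T\|_2 = 1$ from the previous step yields $\|v_{A,i}^T\|_2^2 + \|v_{B,i}^T\|_2^2 = 1$, which is exactly the claim. I would then add a one-line remark that the same argument extends verbatim to $N$ tasks, giving $\sum_{t=1}^N \|v_{t,i}^T\|_2^2 = 1$, so that the ``unit budget'' shared across the tasks is really a rigid constraint for every basis direction $i$. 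The consequence stated in the last sentence of the proposition---that a larger norm for one task forces smaller norms elsewhere---is then just the observation that non-negative summands with fixed sum trade off against each other.

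There is no serious obstacle here; the only subtlety is being careful about the form of the SVD invoked, since the orthonormality used must be that of the \emph{columns} of $V$, not its rows. I would therefore add a short parenthetical clarifying that even in the economy SVD (where $V$ is not square when $d_{out} < N d_{in}$), the columns of $V$ are still orthonormal, so the key identity $\|v_i^T\|_2 = 1$ continues to hold. With that caveat noted, the proof is essentially two lines and the proposition is established.
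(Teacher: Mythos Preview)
Your proposal is correct and essentially matches the paper's own proof: the paper writes $V^T V = I_r$, expands it blockwise as $\sum_t V_t^T V_t = I_r$, and reads off the $i$-th diagonal entry to get $\sum_t \|v_{t,i}^T\|_2^2 = 1$, which is exactly your Pythagorean decomposition of the unit-norm row $v_i^T$ phrased in matrix form. Your caveat about the economy SVD and column orthonormality is well placed and slightly more careful than the paper, which simply assumes $V^T V = I_r$ without discussing the reduced case.
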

The proof and formal statement are deferred to Appendix~\ref{sec:Biased Magnitude-based Pruning of Partitioned Singular Basis}. Informally, this unit budget biases magnitude-based pruning to favor preserving entries from the vector with the larger norm, as in the absence of outliers, its entries will have a larger magnitude on average. This can lead to disproportionate pruning of basis vectors associated with tasks that have a smaller share of the norm budget, regardless of the intrinsic importance of their features. Renormalization corrects this pruning bias by equalizing norms of the partitioned basis vectors.

\begin{wraptable}{h}{0.5\textwidth}
    \vspace{-1.2em}
    \setlength\tabcolsep{2pt}
    \caption{Average sign agreement after pruning different spaces. We compare between pruning in the original parameter space (labeled \emph{Original Weight}), versus pruning in the joint singular space. For latter, sign agreement is measured in both the joint singular space itself (labeled \emph{DS}), and after projecting back to the original space (labeled \emph{OS}). We find that reducing interference on the joint space \textit{does not} correspond to equally high sign agreement in the original parameter space. Note that the minimum sign agreement is $50\%$, when there are an equal number of positive and negative entries. }
    \centering
    \vspace{0.3em}
    \begin{tabular}{c@{\hskip 4pt}c@{\hskip 5pt}c@{\hskip 5pt}c}
        \toprule
         & Space  & ViT-B/32 & Llama-8B \\
        \midrule
        Original Weight & - & 86.6 & 91.0 \\
        \midrule
        Horizontal  & DS & 84.4 & 89.7 \\
        & OS & 71.0 & 70.6\\
        \midrule
        Vertical & DS & 86.1 & 89.9 \\
        & OS & 69.5 & 70.2 \\
        \bottomrule
    \end{tabular}
    \vspace{-1.5em}
    \label{tab:sign conflict}
\end{wraptable}

\subsection{How Does Pruning on Decompose-renormalized Space Reflect Interference in Original Parameter Space?}
\label{sec:How Does Pruning on Decompose-renormalized Space Reflect Interference in Original Parameter Space}

In TIES Merging \citep{tiesmerging}, the authors demonstrate that two major interference sources are redundant parameter values, and sign disagreements at corresponding positions in the weight delta matrices. They hypothesized that mitigating these interferences would enhance merging performance.

However, our analysis in Table \ref{tab:sign conflict} reveals that although pruning in the Decompose-renormalized space effectively improves merging performance, it does not result in equally large sign agreement when projected back to the original parameter space. Specifically, we first project the weight deltas into the Decompose-renormalized space, apply pruning to the basis entries. We then transform the results back to the original space and measure sign agreement, but we found the agreement on the original space to be considerably less. 

Nevertheless, DRM still achieves mostly superior merging performance compared to methods reducing interference directly in the original parameter space. This suggests that interference reduction techniques lead to the most effective merging when applied in the Decompose-renormalized space, rather than directly on individual parameter entries. See Appendix \ref{sec:elaboration of Sign Agreement Analysis} for more detailed results and breakdown on different agreement levels.

\subsection{How Large is the Difference between Individual Task and the Joint Decomposed Weight Delta?}
\label{sec:How Large is the Difference between Individual and the Concatenated Weight Delta}

We theoretically evaluate the difference between weight delta of each individual task, and the joint decomposed weight delta used in DRM-H. We express the difference through their corresponding singular values (i.e. $i$-th singular value of a single weight delta matrix, and $i$-th singular value of the concatenated matrix), and establish the following bound:
\begin{prop}[Bounded Difference of Weight Delta Concatenation] \label{prop:Bounded difference of weight delta concatenation}
    Suppose that $t$ is an arbitrary task index, and $M \in \mathbb{R}^{m\times kn}$ is a concatenation of $k$ copies of $\Delta W_t$, where $k$ is the total number of tasks and $r$ is the rank of $M$. Let $\widetilde{M} = [\Delta W_1 \; \dots \; \Delta W_k]$ be the concatenation of the collection $\{\Delta W_j\}_{j=1}^k$. We define the perturbation matrix
    \begin{align*}
        E = \widetilde{M} - M = [E_1 \; \dots E_k], \quad \text{with $E_j = \Delta W_j - \Delta W_t$ \quad for each $j\in[1,k]$\,.}
    \end{align*}
    Then, the difference between $i$-th singular value of the concatenated matrix $\widetilde{M}$ and of the individual $\Delta W_t$, for all $i\in[1, r]$ and $t\in[1,k]$, is bounded by
    \[
    |\sigma_i(\widetilde{M}) - \sqrt{k}\sigma_i( \Delta W_t )| \leq \frac{1}{\sqrt{k} \sigma_i(\Delta W_t)} \sum_{j=1}^k \left( 2 \|\Delta W_t\|_2 \|E_{j}\|_2 + \|E_{j}\|_2^2 \right).
    \]
\end{prop}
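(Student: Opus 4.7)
The plan is to reduce the singular value comparison to an eigenvalue comparison of the Gram matrices, which linearizes nicely because the off-diagonal cross terms between blocks disappear in $MM^T$ and $\widetilde{M}\widetilde{M}^T$.

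First I would observe that since $M$ is just $k$ copies of $\Delta W_t$ concatenated horizontally, we have $MM^T = k\,\Delta W_t \Delta W_t^T$, and therefore $\sigma_i(M) = \sqrt{k}\,\sigma_i(\Delta W_t)$ for every $i\in[1,r]$. So the target bound is really an estimate on $|\sigma_i(\widetilde{M}) - \sigma_i(M)|$.

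Next I would expand the Gram matrix of the true concatenation using $\Delta W_j = \Delta W_t + E_j$:
\begin{align*}
\widetilde{M}\widetilde{M}^T - MM^T &= \sum_{j=1}^k \Delta W_j \Delta W_j^T - k\,\Delta W_t \Delta W_t^T \\
&= \sum_{j=1}^k \bigl(\Delta W_t E_j^T + E_j \Delta W_t^T + E_j E_j^T\bigr).
\end{align*}
Taking the spectral norm and applying the triangle inequality together with submultiplicativity immediately gives
\[
\bigl\|\widetilde{M}\widetilde{M}^T - MM^T\bigr\|_2 \leq \sum_{j=1}^k \bigl(2\|\Delta W_t\|_2 \|E_j\|_2 + \|E_j\|_2^2\bigr).
\]
Since $\widetilde{M}\widetilde{M}^T$ and $MM^T$ are both symmetric positive semidefinite with eigenvalues $\sigma_i(\widetilde{M})^2$ and $\sigma_i(M)^2$ respectively, Weyl's inequality for Hermitian perturbations yields
\[
\bigl|\sigma_i(\widetilde{M})^2 - \sigma_i(M)^2\bigr| \leq \sum_{j=1}^k \bigl(2\|\Delta W_t\|_2 \|E_j\|_2 + \|E_j\|_2^2\bigr).
\]

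Finally I would factor the left side as $(\sigma_i(\widetilde{M}) + \sigma_i(M))(\sigma_i(\widetilde{M}) - \sigma_i(M))$ and use the trivial lower bound $\sigma_i(\widetilde{M}) + \sigma_i(M) \geq \sigma_i(M) = \sqrt{k}\,\sigma_i(\Delta W_t)$, which is strictly positive for $i \leq r$. Dividing through gives exactly the claimed inequality. The only real subtlety is making sure the denominator is nonzero, which is guaranteed by the range $i\in[1,r]$ in the statement; the rest is a two-line application of Weyl plus the difference-of-squares trick. I expect the main (mild) obstacle is just convincing the reader that the linearized Gram perturbation bound is the tight way to proceed rather than applying Weyl directly to $\widetilde{M} - M$, which would give only $\|E\|_2$ and miss the division by $\sqrt{k}\sigma_i(\Delta W_t)$ that the statement requires.
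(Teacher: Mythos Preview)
Your argument is correct and follows essentially the same route as the paper: both establish $\sigma_i(M)=\sqrt{k}\,\sigma_i(\Delta W_t)$ via $MM^T=k\,\Delta W_t\Delta W_t^T$, and then bound $|\sigma_i(\widetilde{M})-\sigma_i(M)|$ by the block-wise sum $\sum_j(2\|\Delta W_t\|_2\|E_j\|_2+\|E_j\|_2^2)/\sigma_i(M)$. The only difference is that the paper quotes this perturbation bound as a black-box lemma from an external reference, whereas you derive it from scratch via Weyl's inequality on the Gram matrices and the difference-of-squares factorization; your version is therefore a bit more self-contained but otherwise identical in content.
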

This result quantifies the bounded behavior of change to each individual task's weight delta, such that the alteration induced by horizontal concatenation of weight deltas will never exceed a specific amount described by the bound. It is evident that larger singular value $\sigma_i( \Delta W_t )$ will exhibit a tighter bound due to the reciprocal term $\frac{1}{\sigma_i( \Delta W_t )}$ on the right hand side. The proof and additional discussion are provided in Appendix~\ref{sec:Bound of Difference Between Individual Task and the Concatenated Matrix}.

\paragraph{Intuitive interpretation.}

When comparing jointly decomposed weight deltas of different models during interference reduction, it is essential to preserve the top principle features of each model to ensure a fair comparison across tasks; otherwise, certain tasks could be degraded unknowingly. Our derived bound addresses this by indicating a bounded degree of change, especially for the important feature bases characterized by high singular values. It assures the preservation of salient information from each model, facilitating interference reduction and merging in the subsequent steps.

\section{Conclusion}
\label{sec:conclusion}
In this work, we addressed a fundamental limitation in existing interference reduction-based model merging techniques that assume consistent feature composition and arrangement across independently finetuned models. To overcome the challenge, we introduced Decom-Renorm-Merge (DRM), an effective approach that combines singular value decomposition and renormalization for establishing a shared representation space, enabling entry-wise interference reduction techniques and merging to be properly applied within. Our experiments demonstrated that DRM outperformed several state-of-the-art merging techniques across diverse architectures and task modalities. Further analysis provided insights into the crucial role of joint space renormalization, and the necessity of performing interference reduction within the Decompose-renormalized space rather than directly on the parameters. These results further our understanding of effective model merging and provide a means for more efficient knowledge fusion in neural networks.

\section{Acknowledgements}
We are grateful to Artit Suwanbandit and Patawee Prakrankamanant for inspiration and insightful discussions early into this work. Thanks also to Chompakorn Chaksangchaichot for helpful comments that improved the presentation of this paper.

\bibliography{references}


\clearpage

\appendix

\section{Extended Related Works}

\paragraph{Superposition and polysemanticity in neural networks}
Multiple works cue that neural networks encode more features than available dimensions \citep{circuitsininceptionv1}. \citet{cnnfeaturevisualization} and \citet{multifactedfeature} study individual neurons in CNN models and find that neurons are polysemantic---each capable of detecting and responding to several different input patterns. \citet{polysemywordembedding} identifies that word embeddings encode different senses of polysemous words in a linear superposition fashion. \citet{introtocircuits} indicates that, instead of assigning each feature to one dedicated neuron, spreading a single feature across multiple neurons allows deep learning models to pack more features into the limited number of neurons they have. \citet{toymodelofsuperposition} employs toy models to explore the superposition idea. They find that as input feature sparsity increases, models use superposition to represent features beyond what a linear model would typically be able to, though at the cost of compression noise that requires nonlinear filtering. \citet{polysemanticityandcapacity} studies polysemanticity through the lens of allocation of limited feature capacity. They show that networks tend to represent important features monosemantically, and compress together less significant features in polysemantic way. \citet{superpositionmemorizationanddoubledescent} then applies the toy models to establish the relationship between superposition, overfitting, and double descent.

\section{Limitations and Broader Impacts}
\label{sec:limitations and broader impacts}
\subsection{Limitations and Future Works}
As with other model merging methods, DRM may not be universally effective across all task combinations. Tasks with fundamentally conflicting objectives or disparate learned features may result in suboptimal merging performance, potentially leading to negative transfer on certain tasks. The effectiveness of DRM merged models in generalizing to entirely unseen tasks during merging remains unexplored. Our experiments were based on a variety of transformer models. However, experiments on other architecture types such as CNN and RNN remain to be explored. Furthermore, while we employ interference reduction techniques from TIES Merging \citep{tiesmerging} in our work, theoretical understanding of the merging interference and their mitigation is still limited, and we sought to answer more in the future.

\subsection{Impact Statement}
Here, we discuss potential societal impacts of our works. Possible positive impacts are: 
(a) Democratization of Deep Learning: model merging lowers the barrier to entry for developing deep learning models. Pretrained and finetuned open-source models can be leveraged through merging to create custom models tailored to specific needs, fostering innovation in diverse domains.
(b) Resource Efficiency: by enabling the creation of multitask models without the need for expensive retraining or joint training, DRM can significantly reduce the computational resources and energy consumption associated with training deep learning models, contributing to faster and more sustainable AI development. In contrary, our work is not tied to any one specific application, there are no obvious negative societal downside. While it might be possible that merging models with certain hidden biases inadvertently exacerbates the biases into the resulting model, this is largely dependent on the individual model being merged.

\section{Significance Analysis of the Merging Performance}
\label{sec:statistical significances}

To assess the statistical significance of the performance differences observed in Figure \ref{fig:merging different number of tasks}, we conducted paired t-tests between DRM-H and each of the compared baselines (Simple Averaging, TIES Merging, and DARE-TIES) for each subset size (number of tasks merged).

\begin{table}
    \vspace{-1.5em}
    \setlength\tabcolsep{3pt}
    \caption{P-values from paired t-tests for each subset size on ViT-B/32. The null hypothesis assumes equal performance between DRM-H and the baselines, while the alternative hypothesis assumes DRM-H performs better.}
    \centering
    \vspace{0.7em}
    \begin{tabular}{cccccccc}
        \toprule
        & \multicolumn{7}{c}{\textbf{Number of tasks}} \\
        \cmidrule(lr){2-8}
        \textbf{Method} & 2 & 3 & 4 & 5 & 6 & 7 & 8 \\
        \midrule
        Averaging & 8.41e-7 & 1.03e-11 & 2.73e-14 & 1.33e-15 & 4.40e-18 & 6.38e-17 & 2.18e-3  \\
        Ties & 9.96e-1 & 1.49e-3 & 1.35e-10 & 1.60e-12 & 6.69e-16 & 7.85e-14 & 4.61e-3 \\
        Dare-ties & 9.99e-1 & 7.04e-4 & 9.83e-11 & 1.62e-13 & 4.45e-16 & 2.74e-14 & 3.80e-3 \\
        \bottomrule
    \end{tabular}
    \label{tab:vit_sig_test_for_num_tasks}
\end{table}

\begin{table}
    \setlength\tabcolsep{3pt}
    \caption{P-values from paired t-tests for each subset size on T5-Base. The null hypothesis assumes equal performance between DRM-H and the baselines, while the alternative hypothesis assumes DRM-H performs better.}
    \centering
    \vspace{0.7em}
    \begin{tabular}{cccccc}
        \toprule
        & \multicolumn{5}{c}{\textbf{Number of tasks}} \\
        \cmidrule(lr){2-6}
        \textbf{Method} & 2 & 3 & 4 & 5 & 6 \\
        \midrule
        Averaging & 1.68e-3 & 4.20e-6 & 1.73e-10 & 2.28e-7 & 2.83e-2 \\
        Ties & 8.76e-1 & 3.68e-2 & 1.58e-2 & 6.72e-2 & 2.32e-1  \\
        Dare-ties & 9.81e-1 & 2.23e-1 & 2.01e-1 & 1.48e-1 & 3.17e-1 \\
        \bottomrule
    \end{tabular}
    \label{tab:t5_sig_test_for_num_tasks}
\end{table}

The results of the paired t-tests are summarized in Table \ref{tab:vit_sig_test_for_num_tasks} and \ref{tab:t5_sig_test_for_num_tasks}.  We report the p-values obtained for each comparison at each subset size.  P-values less than 0.05 were considered statistically significant.

In ViT-B/32, as the number of tasks increases, the performance differences between DRM-H and the baselines become statistically significant, the p-values consistently fall below 0.05 starting at 3 tasks. This suggests the superior performance of DRM-H in scaling to a larger number of tasks.

\section{Theoretical Details}

\subsection{Biased Magnitude-based Pruning of Partitioned Singular Basis}
\label{sec:Biased Magnitude-based Pruning of Partitioned Singular Basis}

\begin{prop}[Shared Norm Budget of Partitioned Orthonormal Basis Vectors]
\label{prop:appendix_norm_budget}
    Let $V \in \mathbb{R}^{kd \times r}$ be a matrix with orthonormal columns, such that $V^T V = I_r$. Let $V^T$ be partitioned horizontally into $k$ blocks corresponding to $k$ tasks, $V^T = [V_1^T \;\; V_2^T \; \dots \; V_k^T]$, where each block $V_t^T \in \mathbb{R}^{r \times d}$. Let $v_{t,i}^T$ denote the $i$-th row vector of block $V_t^T$.
    
    Then, for any given row index $i \in \{1, \dots, r\}$, the sum of the squared Euclidean norms of the corresponding row vectors across all task blocks is equal to one:
    $$
    \sum_{t=1}^{k} \|v_{t,i}^T\|_2^2 = 1 \,.
    $$
    This demonstrates that the squared norms of the partitioned row vectors for any basis direction $i$ are constrained by a shared unit budget.
\end{prop}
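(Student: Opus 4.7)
The plan is to reduce the statement to a direct consequence of column orthonormality of $V$, by tracking how the partition of $V^T$ interacts with the rows of $V^T$ (equivalently, the columns of $V$). The observation driving the proof is that the partition $V^T = [V_1^T \; V_2^T \; \cdots \; V_k^T]$ slices $V^T$ column-wise into $k$ blocks of width $d$, and therefore does \emph{not} alter any single row of $V^T$: it only subdivides each row into $k$ contiguous pieces of length $d$.

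First I would fix a row index $i\in\{1,\dots,r\}$ and write out what the $i$-th row of $V^T$ is. By definition of transpose, this row equals the $i$-th column of $V$, call it $V_{:,i}\in\mathbb{R}^{kd}$. The orthonormality hypothesis $V^TV=I_r$ gives immediately $\|V_{:,i}\|_2^2 = (V^TV)_{ii} = 1$. Next, I would match the partition of $V^T$ to a block decomposition of $V_{:,i}$: writing $V_{:,i} = (v_{1,i};\,v_{2,i};\,\dots;\,v_{k,i})$ with each $v_{t,i}\in\mathbb{R}^d$, one checks that $v_{t,i}$ is exactly the $i$-th row of the block $V_t^T$, since chopping the columns of $V^T$ into groups of size $d$ corresponds to chopping each row of $V^T$ into groups of size $d$.

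With those identifications, the conclusion is just the Pythagorean identity for concatenated vectors: $\|V_{:,i}\|_2^2 = \sum_{t=1}^k \|v_{t,i}\|_2^2$. Combining with the unit-norm computation above yields the claimed equality $\sum_{t=1}^k \|v_{t,i}^T\|_2^2 = 1$.

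There is no real obstacle here beyond careful bookkeeping of indices and the row/column duality under transposition; the most delicate step is making sure the reader sees that partitioning $V^T$ horizontally into column-blocks of width $d$ is the same as partitioning each row of $V^T$ into contiguous pieces of length $d$. Once this identification is made explicit, the result is immediate from $V^TV=I_r$, with no analytic content beyond the definition of the Euclidean norm.
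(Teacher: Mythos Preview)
Your proof is correct and follows essentially the same approach as the paper: both arguments extract the $i$-th diagonal entry of $V^TV=I_r$ and recognize it as the sum of the squared norms of the $k$ pieces of the $i$-th row of $V^T$. The only cosmetic difference is that the paper writes this via the block expansion $\sum_{t=1}^k V_t^T V_t = I_r$ and then reads off the diagonal, whereas you phrase it as a Pythagorean identity for the concatenated column $V_{:,i}$; the underlying computation is identical.
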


\begin{proof}
    We begin with the orthonormality property of the matrix $V$:
    $$
    V^T V = I_r \,.
    $$
    Substituting the partitioned form of $V^T$ and its corresponding transpose $V$:
    $$
    [V_1^T \;\; V_2^T \; \dots \; V_k^T]
    \begin{bmatrix}
        V_1 \\
        V_2 \\
        \vdots \\
        V_k
    \end{bmatrix}
    = \sum_{t=1}^{k} V_t^T V_t = I_r \,.
    $$
    This equation establishes an identity between the sum of matrix products and the identity matrix $I_r$. We are interested in the diagonal entries of this identity. The $i$-th diagonal entry is given by:
    $$
    \left( \sum_{t=1}^{k} V_t^T V_t \right)_{ii} = \sum_{t=1}^{k} (V_t^T V_t)_{ii} = (I_r)_{ii} = 1 \,.
    $$
    The term $(V_t^T V_t)_{ii}$ represents the inner product of the $i$-th row of $V_t^T$ (which is $v_{t,i}^T$) with the $i$-th column of $V_t$. Since the $i$-th column of $V_t$ is the transpose of the row vector $v_{t,i}^T$, this inner product is equivalent to the squared Euclidean norm of the row vector:
    $$
    (V_t^T V_t)_{ii} = v_{t,i}^T (v_{t,i}^T)^T = \|v_{t,i}^T\|_2^2 \,.
    $$
    Substituting this back into the sum gives the final result:
    $$
    \sum_{t=1}^{k} \|v_{t,i}^T\|_2^2 = 1 \,.
    $$
    This holds for all row indices $i=1, \dots, r$.
\end{proof}

Due to the existence of this constraint, a larger norm for one task’s vector necessitates a smaller norm for the others. The connection from this norm budget to the entry magnitude is given in the next corollary.

\begin{cor}[Larger RMS Magnitude in Vector with Larger Norm Share]
\label{cor:pruning_bias}
An unequal distribution of this norm share (i.e. $\|v_{A,i}^T\|_2^2 \neq \|v_{B,i}^T\|_2^2$ for two tasks $A$ and $B$) directly implies that the root-mean-square (RMS) magnitude of entries is higher in the vector with the larger norm share.

\end{cor}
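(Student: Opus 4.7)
The plan is to reduce the claim to a direct comparison of squared $L^2$ norms, exploiting the definitional identity between the root-mean-square magnitude and the $L^2$ norm of a vector. For any vector $v \in \mathbb{R}^d$ one has
\[
\operatorname{RMS}(v) \;=\; \sqrt{\tfrac{1}{d}\sum_{j=1}^d v_j^2} \;=\; \frac{\|v\|_2}{\sqrt{d}},
\]
so RMS is nothing more than the $L^2$ norm rescaled by the fixed factor $1/\sqrt{d}$.

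Next I would observe that in the setting of Proposition~\ref{prop:appendix_norm_budget}, every partitioned block $V_t^T \in \mathbb{R}^{r \times d}$ has the same row dimension $d$, so for any fixed row index $i$ the vectors $v_{A,i}^T$ and $v_{B,i}^T$ both lie in $\mathbb{R}^d$ for a common $d$. Consequently, the denominator $\sqrt{d}$ drops out of any comparison between the two rows, and the hypothesis $\|v_{A,i}^T\|_2^2 \neq \|v_{B,i}^T\|_2^2$ translates, by monotonicity of the square root, into a strict inequality on RMS values in the same direction: whichever side of $\sum_{t}\|v_{t,i}^T\|_2^2 = 1$ carries the larger share of the unit norm budget also has the larger RMS. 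This is exactly the content of the corollary.

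The main obstacle is essentially nonexistent: the argument is a one-line consequence of the definition $\|v\|_2^2 = d \cdot \operatorname{RMS}(v)^2$ together with equality of the ambient dimension across the partitioned blocks, followed by an appeal to Proposition~\ref{prop:appendix_norm_budget}. The only subtlety worth flagging, to avoid overstating the result, is that the conclusion is an \emph{average} statement rather than an entrywise one---a particular coordinate of the smaller-norm vector could still exceed the corresponding coordinate of the larger-norm vector in magnitude---which is precisely why the surrounding text invokes ``in the absence of outliers'' when using this corollary to explain why magnitude-based pruning is biased toward retaining entries from the vector carrying the larger share of the norm budget.
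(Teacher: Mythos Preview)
Your proposal is correct and takes essentially the same approach as the paper: both arguments reduce the claim to the identity $\operatorname{RMS}(v)=\|v\|_2/\sqrt{d}$, observe that the partitioned row vectors share a common ambient dimension $d$, and conclude that the strict norm inequality transfers directly to a strict RMS inequality. Your added remark about the conclusion being an average rather than an entrywise statement is a useful clarification that matches the paper's surrounding discussion.
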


\begin{proof}
Consider two task-specific $d$-dimensional row vectors, $v_{A,i}^T$ and $v_{B,i}^T$, for a given basis direction $i$. Assume without loss of generality that task A has a larger norm share for this direction, such that
$$
\|v_{A,i}^T\|_2 > \|v_{B,i}^T\|_2 \;.
$$
Since the norms are non-negative, squaring both sides preserves the inequality:
$$
\|v_{A,i}^T\|_2^2 > \|v_{B,i}^T\|_2^2 \;.
$$
The root-mean-square (RMS) magnitude of a vector $v$ of dimension $d$ is defined as $\text{RMS}(v) = \sqrt{\frac{1}{d}\sum_{j=1}^d v_j^2} = \frac{\|v\|_2}{\sqrt{d}}$.

Applying this definition to our two vectors gives
$$
\text{RMS}(v_{A,i}^T) = \frac{\|v_{A,i}^T\|_2}{\sqrt{d}} \quad \text{and} \quad \text{RMS}(v_{B,i}^T) = \frac{\|v_{B,i}^T\|_2}{\sqrt{d}} \;.
$$
Given our initial assumption that $\|v_{A,i}^T\|_2 > \|v_{B,i}^T\|_2$, it directly follows that:
$$
\text{RMS}(v_{A,i}^T) > \text{RMS}(v_{B,i}^T) \;.
$$

Thus, a larger norm share necessarily leads to a higher RMS magnitude.
\end{proof}

Under the reasonable assumption that the entries of $v_{A,i}^T$ and $v_{B,i}^T$ are drawn from distributions of the same shape, their scale will be determined by their respective RMS magnitudes. Consequently, the entries of the vector with the lower norm share are statistically smaller. Therefore, \uline{applying a uniform magnitude-based pruning across these un-renormalized vectors will  disproportionately remove entries from the vector with the lower norm share}.

\subsection{Bound of Difference Between Individual Task and the Concatenated Matrix}
\label{sec:Bound of Difference Between Individual Task and the Concatenated Matrix}

This section provides derivation and formal discussion of the Proposition~\ref{prop:Bounded difference of weight delta concatenation} (Bounded difference of weight delta concatenation) from Section~\ref{sec:How Large is the Difference between Individual and the Concatenated Weight Delta}. We begin by stating a theorem proven by \citet{perturbationanalysisofsingularvalueinconcatmatrices}:

\begin{lemma}[Singular Value Perturbation of a Concatenated Matrix Theorem in \citep{perturbationanalysisofsingularvalueinconcatmatrices}]
\label{lemma:perturbation_matrix}

Let \(\{A_j\}_{j=1}^k\) be a collection of matrices with \(A_j \in \mathbb{R}^{m \times n}\) for all \(j \in [1, k]\). Define the original concatenated matrix by
\[
M = [A_1 \; \ldots \; A_k] \in \mathbb{R}^{m \times kn},
\]
and denote its rank by \(r = \operatorname{rank}(M)\). Let \(\widetilde{M} = [\widetilde{A}_1 \; \ldots \; \widetilde{A}_k]\) be the perturbed version of \(M\), and define the perturbation matrix
\[
E = \widetilde{M} - M = [E_1 \; \ldots \; E_k], \quad \text{with } E_j = \widetilde{A}_j - A_j \text{ for each } j \in [1, k].
\]
Then, the following perturbation bounds for the singular values hold:

\begin{itemize}
    \item For \(i = 1, \ldots, r\) (corresponding to the nonzero singular values of \(M\)),
    \[
    \left| \sigma_i(\widetilde{M}) - \sigma_i(M) \right| \leq \frac{1}{\sigma_i(M)} \sum_{j=1}^k \left( 2\|A_j\|_2 \|E_j\|_2 + \|E_j\|_2^2 \right).
    \]

    \item For \(i = r+1, \ldots, \min(m, kn)\) (corresponding to the zero singular values of \(M\)),
    \[
    \sigma_i(\widetilde{M}) \leq \sqrt{ \sum_{j=1}^k \left( 2\|A_j\|_2 \|E_j\|_2 + \|E_j\|_2^2 \right) }.
    \]
\end{itemize}

\end{lemma}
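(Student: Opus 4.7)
The plan is to lift the problem from singular values to eigenvalues of Gram matrices, exploit the column-block structure to decompose the perturbation additively across the $k$ blocks, apply Weyl's inequality, and then convert back from squared singular values to singular values. Concretely, I begin with the identity $\sigma_i^2(X) = \lambda_i(XX^T)$ and write
\begin{align*}
\widetilde{M}\widetilde{M}^T &= (M+E)(M+E)^T \\
&= MM^T + ME^T + EM^T + EE^T.
\end{align*}
Because $M = [A_1 \; \ldots \; A_k]$ and $E = [E_1 \; \ldots \; E_k]$ share the horizontal block partition, each of the three cross terms decomposes as a sum over blocks, e.g.\ $ME^T = \sum_{j=1}^k A_j E_j^T$, and similarly for the other two. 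Hence the total perturbation of the Gram matrix is $F = \sum_{j=1}^k F_j$ with $F_j = A_j E_j^T + E_j A_j^T + E_j E_j^T$.

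Next I would bound $\|F\|_2$ by the triangle inequality and the sub-multiplicativity of the spectral norm: $\|F_j\|_2 \le 2\|A_j\|_2\|E_j\|_2 + \|E_j\|_2^2$, so
\[
\|F\|_2 \;\le\; \sum_{j=1}^k \bigl(2\|A_j\|_2\|E_j\|_2 + \|E_j\|_2^2\bigr).
\]
Weyl's inequality for Hermitian perturbation then gives $|\lambda_i(\widetilde{M}\widetilde{M}^T) - \lambda_i(MM^T)| \le \|F\|_2$, i.e.\ $|\sigma_i^2(\widetilde{M}) - \sigma_i^2(M)| \le \|F\|_2$, for every index $i$.

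For the first bullet ($1 \le i \le r$), I convert the squared-value bound to a value bound via the difference-of-squares factorization
\[
|\sigma_i(\widetilde{M}) - \sigma_i(M)| \;=\; \frac{|\sigma_i^2(\widetilde{M}) - \sigma_i^2(M)|}{\sigma_i(\widetilde{M}) + \sigma_i(M)} \;\le\; \frac{|\sigma_i^2(\widetilde{M}) - \sigma_i^2(M)|}{\sigma_i(M)},
\]
which is valid precisely because $\sigma_i(M) > 0$ for $i \le r$. Substituting the bound on $\|F\|_2$ yields the first inequality of the lemma. For the second bullet ($i > r$), I use $\sigma_i(M) = 0$, so $\sigma_i^2(\widetilde{M}) \le \|F\|_2$ directly, and taking square roots yields the stated estimate.

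\paragraph{Main obstacle.}

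The routine calculations (Gram-matrix expansion, triangle inequality on the blocks, Weyl's inequality) are mechanical; the delicate step is the passage from $\sigma_i^2$ to $\sigma_i$, because the naive Weyl bound $|\sigma_i(\widetilde M) - \sigma_i(M)| \le \|E\|_2$ is only first-order and in general tighter than the second-order bound we derive. The point of the lemma is that for large $\sigma_i(M)$ the factor $1/\sigma_i(M)$ yields a \emph{tighter} estimate than Weyl, so I must be careful to keep the dependence on $\sigma_i(M)$ in the denominator rather than, say, averaging with $\sigma_i(\widetilde{M})$, which would weaken the result. The other minor subtlety is verifying that the block decomposition of $F$ is legitimate when $E$ and $M$ have the same column partition, which is immediate from the definition of matrix multiplication by blocks.
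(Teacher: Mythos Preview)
The paper does not actually supply a proof of this lemma; it is stated verbatim as a result imported from the cited reference on singular-value perturbation of concatenated matrices, and is then used as a black box in the proof of Proposition~\ref{prop:Bounded difference of weight delta concatenation}. So there is no in-paper proof to compare against.

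That said, your proof plan is sound and is the natural argument one would expect the original reference to use. The key identities you rely on---$\sigma_i^2(X)=\lambda_i(XX^T)$, the block expansion $ME^T=\sum_j A_jE_j^T$ (and analogues), Weyl's eigenvalue perturbation inequality for the symmetric matrix $F=\sum_j(A_jE_j^T+E_jA_j^T+E_jE_j^T)$, and the difference-of-squares step $|\sigma_i(\widetilde M)-\sigma_i(M)|\le|\sigma_i^2(\widetilde M)-\sigma_i^2(M)|/\sigma_i(M)$---are all correct as stated, and the two cases ($i\le r$ versus $i>r$) are handled properly. The only cosmetic remark is that your ``main obstacle'' paragraph slightly overstates the subtlety: once you have $|\sigma_i^2(\widetilde M)-\sigma_i^2(M)|\le\|F\|_2$, the denominator $\sigma_i(\widetilde M)+\sigma_i(M)\ge\sigma_i(M)$ is immediate from nonnegativity of singular values, so there is no real risk of ``weakening'' the bound by accident.
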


Then, we consider the relationship between singular value of a single matrix, compared to when it is repeatedly concatenated into a horizontal block matrix:
\begin{lemma} \label{lem:eigen_of_concate_matrix}
    Suppose that $M\in \mathbb{R}^{m \times kn}$ is a horizontal concatenation of $k$ copies of $A\in\mathbb{R}^{m \times n}$; then,
    \[
    \sigma_i(M) = \sqrt{k}\sigma_i(A)
    \]
    where $\sigma_i(\cdot)$ denotes the $i$-th singular value.
\end{lemma}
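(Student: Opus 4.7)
}
The plan is to reduce the claim to a statement about eigenvalues of the Gram matrix $MM^T$, using the fact that for any matrix $X$, the squared singular values $\sigma_i(X)^2$ are precisely the eigenvalues $\lambda_i(XX^T)$ arranged in decreasing order. Since $M = [A \; A \; \cdots \; A]$ is a horizontal concatenation of $k$ identical blocks, a direct block multiplication gives
\begin{equation*}
    MM^T \;=\; \sum_{j=1}^{k} A A^T \;=\; k\, AA^T\,.
\end{equation*}
Eigenvalues scale linearly with scalar multiplication of the matrix, so $\lambda_i(MM^T) = k\,\lambda_i(AA^T)$, and taking square roots yields $\sigma_i(M) = \sqrt{k}\,\sigma_i(A)$ for every index $i$ in the appropriate range.

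For completeness I would also sketch the alternative constructive proof via the SVD, which exhibits the singular vectors explicitly. Starting from a full SVD $A = U \Sigma V^T$ with $V \in \mathbb{R}^{n \times n}$ orthogonal, one writes
\begin{equation*}
    M \;=\; [\,U\Sigma V^T \; \cdots \; U\Sigma V^T\,] \;=\; U\Sigma \,[\,V^T \; \cdots \; V^T\,] \;=\; U\,(\sqrt{k}\,\Sigma)\, \widetilde V^T\,,
\end{equation*}
where $\widetilde V^T := \tfrac{1}{\sqrt k}[\,V^T \; V^T \; \cdots \; V^T\,] \in \mathbb{R}^{n \times kn}$. A short calculation shows $\widetilde V^T \widetilde V = \tfrac{1}{k}\sum_{j=1}^k V^T V = I_n$, so $\widetilde V^T$ has orthonormal rows. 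Hence $U(\sqrt{k}\,\Sigma)\widetilde V^T$ is a valid SVD of $M$, from which the singular values $\sqrt{k}\,\sigma_i(A)$ are read off directly; this also identifies the left singular vectors of $M$ with those of $A$, which will be useful context for the main proposition.

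The argument is essentially routine linear algebra, so there is no real obstacle. The only minor subtlety to be careful about is indexing and the range of $i$: the identity $\sigma_i(M) = \sqrt{k}\,\sigma_i(A)$ holds for all $i \le \min(m,n)$, and beyond that both sides are zero (since $\operatorname{rank}(M) = \operatorname{rank}(A)$ as the column spaces coincide). I would state this explicitly to avoid any ambiguity when this lemma is combined with Lemma~\ref{lemma:perturbation_matrix} in the proof of Proposition~\ref{prop:Bounded difference of weight delta concatenation}, where $A$ is taken to be $\Delta W_t$ and the perturbation bound is applied with $\sigma_i(M) = \sqrt{k}\,\sigma_i(\Delta W_t)$ substituted on the left-hand side.
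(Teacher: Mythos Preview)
Your primary argument is correct and essentially identical to the paper's: compute $MM^T = kAA^T$ via block multiplication, conclude $\lambda_i(MM^T) = k\,\lambda_i(AA^T)$, and take square roots. The alternative explicit-SVD construction and the remark on the index range are correct extras that the paper does not include, but they do not change the approach.
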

\begin{proof}[Proof]
First, we consider the $MM^T$ as a block matrix multiplication:
\begin{align*}
    MM^T = [A \;\; A \;\; \cdots \;\; A] 
\begin{bmatrix}
A^T \\
A^T \\
\vdots \\
A^T
\end{bmatrix} 
= \sum_{j=1}^k AA^T  
= k AA^T.
\end{align*}
For an eigenvector $x_i$ corresponding to $\lambda_i(AA^T)$, we see that 
\begin{align*}
MM^Tx_i = kAA^T x_i = k \lambda_i(AA^T) x_i.
\end{align*}
This implies $\lambda_i(MM^T) = k \lambda_i(AA^T)$. By definition, the singular value of matrix $M$ equals the square-root of the corresponding eigenvalue of $MM^T$:
\begin{align*}
    \sigma_i(M) = \sqrt{\lambda_i(MM^T)} = \sqrt{k \lambda_i(AA^T)} = \sqrt{k} \cdot \sqrt{\lambda_i(AA^T)} = \sqrt{k} \cdot \sigma_i(A).
\end{align*}

\end{proof}

Building on the previous arguments, we now establish the proof of the main Proposition~\ref{prop:Bounded difference of weight delta concatenation} stated in the Section~\ref{sec:How Large is the Difference between Individual and the Concatenated Weight Delta}:

\begin{proof}[Proof of Proposition~\ref{prop:Bounded difference of weight delta concatenation}] From Lemma \ref{lemma:perturbation_matrix}, we obtain the perturbation bound for the singular values as follows:
    \begin{align*}
        |\sigma_i(\widetilde{M}) - \sigma_i(M)| &\leq \frac{1}{\sigma_i(M)} \sum_{j=1}^k \left( 2 \|A_{j}\|_2 \|E_{j}\|_2 + \|E_{j}\|_2^2 \right) \quad \text{for all $i \in [1, r]$}.
    \end{align*}
Then, we substitute $\sigma_i(M)$ with $\sqrt{k}\sigma_i(A)$, following Lemma~\ref{lem:eigen_of_concate_matrix}:
    \[
    |\sigma_i(\widetilde{M}) - \sqrt{k}\sigma_i( A )| \leq \frac{1}{\sqrt{k} \sigma_i(A)} \sum_{j=1}^k \left( 2 \|A\|_2 \|E_{j}\|_2 + \|E_{j}\|_2^2 \right) \quad \text{for all $i \in [1, r]$}.
    \]
Finally, we substitute back $A= \Delta W_t$, acquiring
    \[
    |\sigma_i(\widetilde{M}) - \sqrt{k}\sigma_i( \Delta W_t )| \leq \frac{1}{\sqrt{k} \sigma_i( \Delta W_t )} \sum_{j=1}^k \left( 2 \| \Delta W_t \|_2 \|E_{j}\|_2 + \|E_{j}\|_2^2 \right) \quad \text{for all $i \in [1, r]$}.
    \]
    
\end{proof}

The coefficient $\sqrt{k}$ naturally arises for our DRM approach, which involves concatenating $k$ matrices; however, this factor is generally not problematic as the number of tasks $k$ is typically small and remains relatively consistent for each merging setting. Likewise, vertically concatenated matrices can also be bounded by going through the same steps on the transposed version, producing a very similar inequality due to the invariance under transpose property of singular value and spectral norm.

\paragraph{Limitation.}
While Proposition~\ref{prop:Bounded difference of weight delta concatenation} establishes the upper bound of difference between $i$-th singular values of (1) weight delta of an individual task, and (2) jointly concatenated weight delta; however, note that $i$-th basis vectors of the two matrices doesn't necessarily correspond to the exact same basis direction.

\section{Analysis of Merging through Different Decompositions}
\label{sec: Analysis of Merging Models through Different Decompositions}
Whilst we choose to employ SVD in this work, it is natural to consider alternative decomposition techniques. This section analyzes the implication of different decompositions to our problem of model merging.

\subsection{Singular Value Decomposition}
One of the most widely used decomposition technique, SVD factorizes a matrix into a rotation, rescaling, and another rotation respectively. Namely, given a real matrix $A \in \mathbb{R}^{m \times n}$, we have the SVD:
\[
A  = U \Sigma V^T,
\]
where left singular vector matrix $U \in \mathbb{R}^{m \times m}$ is a real orthogonal matrix with its column describing the column space of $A$, right singular vector matrix $V^T \in \mathbb{R}^{n \times n}$ is a real orthogonal matrix with its row describing the row space of $A$; and singular value matrix $\Sigma \in \mathbb{R}^{m \times n}$ is a non-negative real diagonal matrix, with singular values as its diagonal entries. The number of non-zero singular values equals the rank of the matrix $A$.

SVD is desirable for our approach due to three key properties: 
\paragraph{Decoupling between basis scales and directions.} 
As demonstrated earlier, applying SVD on the concatenated weight deltas decomposes them into a matrix of shared basis ($U$), a scaling matrix ($\Sigma$), and individual basis matrices ($V^T_t$) that are aligned inside the same shared space. The decoupling of directions captured by singular vectors, and scales captured by singular values, grants us the ability to work on the feature direction and magnitude separately. Allowing us to organize our DRM algorithm as proposed in Section~\ref{sec:proposedmethod}: (1) pruning the scale-less basis vector matrices, then (2) performing sign election and disjoint averaging on the scaled basis vector matrices. See matrix depiction of our method in Figure~\ref{fig:matrix_viewpoint_diagram}.

\paragraph{Invariance of concatenation orders.} 
Different orders of weight delta concatenation: $[A \; B] $ and $ [B \; A]$ should produce the same shared space represented by $U$ (or very similar in practice). This invariance is crucial in ensuring the robustness of the decomposition process. As we treat each task as equal peers, shuffling the task concatenation order should not generate different merging outcomes.

\paragraph{Factorization into column and row basis matrices.} 
This enables us to meaningfully concatenate the weight deltas in either horizontal (DRM-H) or vertical (DRM-V) direction, analogous to decomposing into a shared column or row space respectively. We discuss the implications and distinctions between the twos thoroughly in Section~\ref{sec: connection between v and h}.

Moreover, the sorted singular value property of SVD also enables direct manipulation of matrix rank. Low-rank approximation can be obtained simply by zeroing out smaller singular value entries \citep{eckart-young}. However, this property was not utilized in DRM as we observed that dropping rank leads to decrease in performance (Section~\ref{sec:rank dropping}).

While SVD offers numerous benefits, it also has limitations. First of all, SVD is computationally more intensive compared to other matrix decompositions, and can become a bottleneck as the sizes of the weight delta matrices grow. We mitigate this by running SVD on a GPU via the cuSOLVER library. Furthermore, SVD is non-unique. Namely, multiplying both a left singular vector $u_i$ and its corresponding right singular vector $v_i$ by -1 yields another valid SVD. This non-uniqueness arises because SVD identifies a \emph{subspace} of the matrix, not the exact feature, and the sign of basis vectors can flip without effecting the space. However, this is not problematic for our use case. In DRM-H, as $U$ is always shared across weight deltas, if a single column $u_i$ is sign flipped, then row basis $v_{t, i}$ of every task will also be flipped, leading to no net effect in downstream sign election. The same holds for DRM-V.

Here is one simple illustration of the non-uniqueness of SVD. Let $A = \begin{bmatrix} 1 & 0 \\ 0 & 2 \end{bmatrix}$, a valid SVD is $ \begin{bmatrix} 1 & 0 \\ 0 & 1 \end{bmatrix} \begin{bmatrix} 1 & 0 \\ 0 & 2 \end{bmatrix} \begin{bmatrix} 1 & 0 \\ 0 & 1 \end{bmatrix}$. Another valid SVD is obtained by flipping the sign of the second singular vector: $ \begin{bmatrix} 1 & 0 \\ 0 & -1 \end{bmatrix} \begin{bmatrix} 1 & 0 \\ 0 & 2 \end{bmatrix} \begin{bmatrix} 1 & 0 \\ 0 & -1 \end{bmatrix} $.

\begin{figure}
    \centering
    \includegraphics[width=1.0\linewidth]{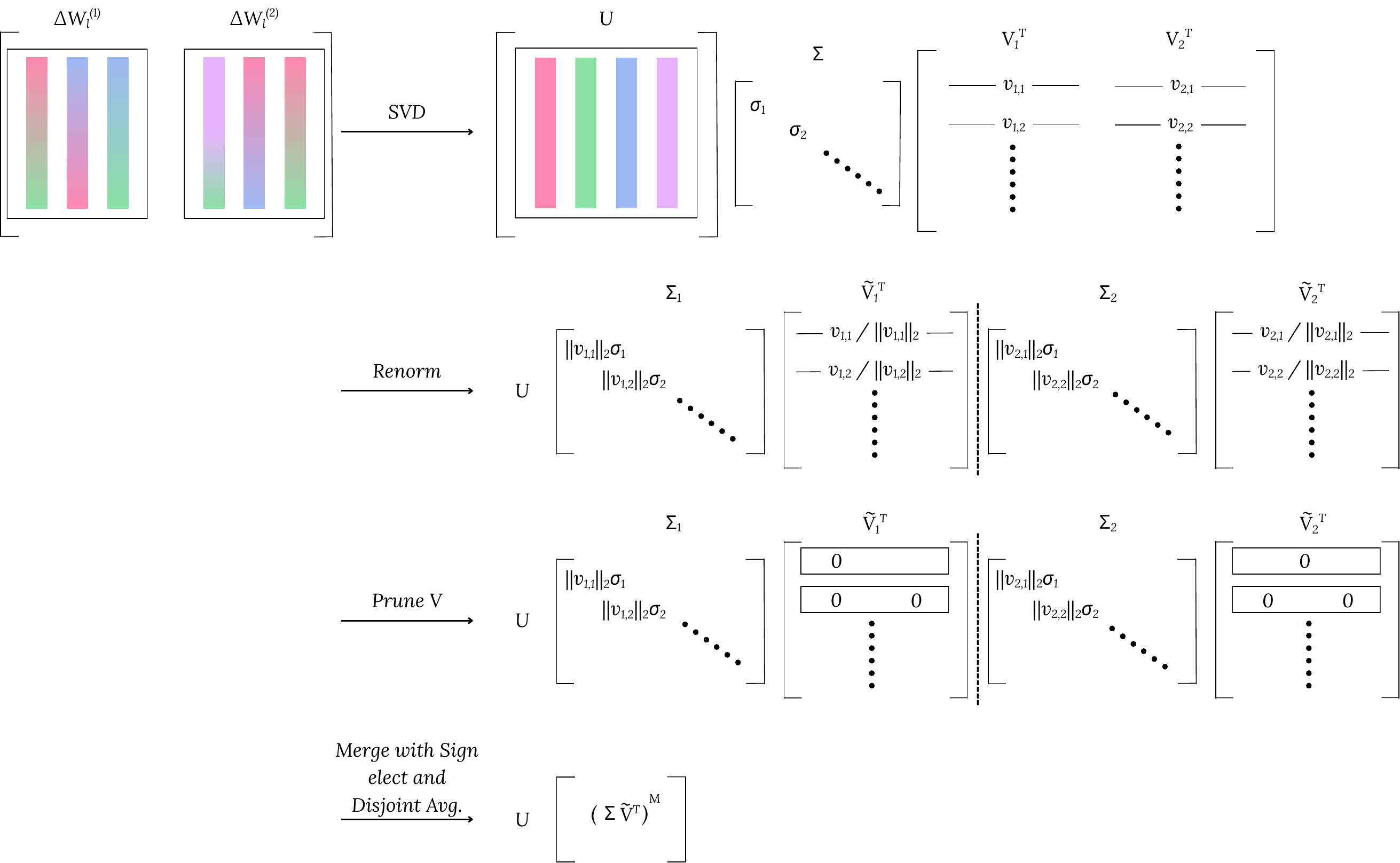}
    \caption{A matrix viewpoint of DRM-H. \textbf{(First row)} Horizontal joint decomposition. \textbf{ (Second row) } Renormalization on row basis vectors. \textbf{(Third row)} Pruning of the right singular vector matrices. \textbf{(Fourth row)} Sign election and disjoint averaging of right singular vector matrices. } 
    \label{fig:matrix_viewpoint_diagram}
\end{figure}

\subsection{QR Decomposition}
The QR decomposition factorizes a matrix into a set of column basis vectors, and their column operations. Namely, given a matrix $A \in \mathbb{R}^{m \times n}$, we have the QR decomposition:
\[ A = QR ,\]
where $Q \in \mathbb{R}^{m \times m}$ is an orthogonal matrix, and $R \in \mathbb{R}^{m \times n}$ is an upper triangular matrix.

Similar what we have done with the SVD, we can apply the QR decomposition to the horizontally concatenated weight deltas, and view the step as aligning into a shared column space described by $Q$. Notwithstanding, \emph{QR decomposition is strongly dependent to the ordering of columns}, and by extension, strongly dependent to the order of weight deltas concatenation . The first column of $Q$ will always be the normalized first column of $A$ (or negative counterpart of the same normalized vector), due to the upper triangular constraint of $R$. Additionally, if we were to mimic the step of splitting $V^T$ into $ \{ V_t^T\}_{t=1}^N$ in DRM for representing each model, splitting $R$ produces submatrices $ \{ R_t \}_{t=1}^N$ with imbalanced distribution of zero entries due to the triangular structure of $R$. These behavior could introduce bias in the subsequent interference reduction and merging steps.

\subsection{Canonical Polyadic Decomposition}
While joint decomposition through SVD effectively establishes a shared representation space for interference reduction and merging, 2-dimensional structure of matrix may not fully capture the complex interdependencies between models being merged. We briefly explore the use of Canonical Polyadic (CP) decomposition, a tensor factorization technique that can potentially model higher-order interactions across multiple models.

CP decomposition factorizes a tensor into a sum of $k$ rank-one tensors \citep{tensordecompositions}.
For our application, we can stack $p$ weight delta matrices in a new axis, forming a tensor $ \mathcal{T} \in \mathbb{R}^{p \times m \times n} $. We have the CP decomposition: 
\begin{align*}
\mathcal{T} = \sum_{r=1}^{R} \lambda_r(\mathbf{a}_r \otimes \mathbf{b}_r \otimes \mathbf{c}_r)
\end{align*}
where $R$ is the rank of the CP decomposition, $\lambda_r$ is the scaling factor of the $r$-th rank. $ \mathbf{a}_r \in \mathbb{R}^{p} $, $ \mathbf{b}_r \in \mathbb{R}^{m} $, and $ \mathbf{c}_r \in \mathbb{R}^{n} $ are the factor vectors for the $r$-th component. The symbol $ \otimes $ denotes the Kronecker product. Each rank-one tensor represents a basis component of the overall tensor, capturing a specific pattern of interaction between weight deltas.

In principle, we could adapt DRM to work with the CP decomposition. Analogously, we could partition the factor vector $\mathbf{a}_r$ into $p$ segments, each representing the contribution of a single model to the $r$-th component. Then, we multiply the segment back with other two corresponding factor vectors of the $r$-th rank, and apply interference reduction and merging techniques to these segments in the shared CP space.

However, we encounter significant obstacles with this approach, primary of which is the inherent computation burden of CP decomposition. In our experiments using TensorLy \citep{tensorly}, merging through CP decomposition took up hours on even our smallest model ViT-B/32. Determining the rank of a tensor is also NP-hard \citep{nphardtensors}, demanding tuning search to find the optimal rank hyperparameter.

\section{Elaboration of Sign Agreement Analysis}
\label{sec:elaboration of Sign Agreement Analysis}
Already investigated in Section~\ref{sec:How Does Pruning on Decompose-renormalized Space Reflect Interference in Original Parameter Space}, using sign agreement as a measure of interference. Here we compliment the previous investigation, and visualize detailed histograms comparing the distribution of sign agreements after pruning the original and the Decompose-renormalized spaces of ViT-B/32 and Llama3.1-8B, in Figure~\ref{fig:sign_agreement_stack_hist}. 

Particularly, for the histogram labels \textit{Horizontal} and \textit{Vertical}, we first project the weight deltas into the Decompose-renormalized space, apply pruning to the basis vector entries, and measure the sign agreements. Afterwards, we transform the results back to the original space and measure sign agreement, labeled \textit{Horizontal (Original Space)} and \textit{Vertical (Original Space)}. For the histogram label \textit{Original Weight}, we simply directly prune the weight deltas and report their sign agreements.

Note that that the minimum sign agreement is $50\%$, when there are an equal numbers of positive and negative entries. Notably, for the bin of $0.9-1.0$ sign agreement level, transforming back to the original space showcases a sign agreement drop of approximately 3 folds, on both models we experimented on. This supports our conclusion in Section~\ref{sec:How Does Pruning on Decompose-renormalized Space Reflect Interference in Original Parameter Space}.

\begin{figure}
    \centering
    \includegraphics[width=1.0\linewidth]{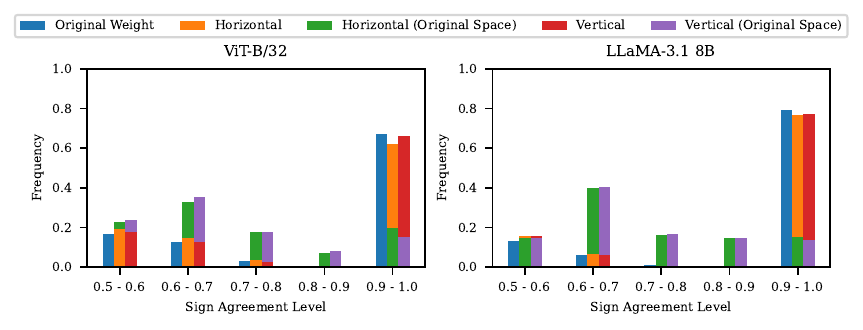}

     \caption{Histogram of sign agreement after pruning the original and the Decompose-renormalized spaces of Vit-B/32 and LLaMA-3.1 8B. For each position in the weight delta matrix, we tally the agreement in sign across different tasks. Then, the agreement across all positions are aggregated into a histogram. The range of sign agreement is between $0.5$ and $1.0$, where $0.5$ denotes a position that has an equal number of positive and negative values, while $1.0$ refers to a position where all tasks have the same sign. We visualize the sign agreement in original parameter space (\textit{blue}), the joint singular space (\textit{orange} and \textit{red}), and after projecting joint singular space back to the original space (\textit{green} and \textit{purple}).}
    
    \label{fig:sign_agreement_stack_hist}
\end{figure}

\label{sec:Characteristic of the Decompose-renormalized Space}

\section{Horizontal vs. Vertical Joint Decomposition}
\label{sec: connection between v and h}

This work introduces two variants of DRM: (1) \textit{DRM-H}, which performs joint decomposition on horizontally concatenated weight deltas, and (2) \textit{DRM-V}, which operates on vertically concatenated weight deltas. These processes can be interpreted as decomposing and coordinating weight deltas into a shared column space (DRM-H) or a shared row space (DRM-V). Empirically, we observe that DRM-H demonstrated a superior performance over DRM-V across all settings evaluated \underline{without} hyperparameter tuning, including ViT-B/32, ViT-L/14, DeBERTa-Base, T5-Base, T5-Large, and Llama3.1-8B (with a surplus as large as $12.8\%$ in DeBERTa-Base). On the other hand, when held-out validation set was available for tuning, DRM-V managed to achieve higher performance compared to DRM-H on T5-Base, T5-Large, and Llama3.1-8B ($0.7\%$, $2.3\%$, and $1.0\%$ absolute improvements, respectively). This suggests that DRM-H is generally more robust and outperforms DRM-V across a wide range of merging settings. While DRM-V may be more sensitive to hyperparameter settings, it can occasionally achieve higher performance on specific models and tasks with careful tuning.

One possible consideration for the two variants concerns the shape of the weight matrices. While many linear transformations within the self-attention mechanism of transformers utilize square matrix, certain models employ rectangular projection weights. For example, models using Grouped Query Attention \citep{gqa}, such as Llama3.1-8B \citep{llama, llama3} used in our experiments, often adopt rectangular key and value projection weights with more columns than rows. This difference in input and output dimensionality results in different effective dimension of the Decompose-renormalized space for us to work on. Concretely, for weight deltas of shape $m\times n$, DRM-H aligns weight deltas into a shared column space, and work on the partitioned row basis vectors of dimension $n$, while DRM-V aligns the deltas into a mutual row space, and work on the partitioned column basis vectors of dimension $m$. Due to \textit{frequent orthogonality} property of high dimensional space, vectors are much more likely to be orthogonal in higher dimensions. Prior work \citet{taskarithmetic} speculated that higher orthogonality between task vectors corresponds to lower interference during model merging. Given this, if $m \neq n$, then the disparate degree of orthogonality between basis vectors could be a source of the observed differences in merging outcomes between DRM-H and DRM-V.

Additionally, model architectures and training schemes seem to play a role as well. As we observe that DRM-H consistently exceeds DRM-V's performance when merging encoder-based models: ViT-B/32, ViT-L/14, DeBERTa-Base. Then, DRM-V outperforms when performing \underline{tuned} merging on encoder-decoder-based and decoder-only models: T5-Base, T5-Large, and Llama3.1-8B. We leave in-depth study of the effect and connection between DRM-H and DRM-V as future work.

\section{Additional Experiments and Results}
\subsection{Effect of Rank Truncation}
\label{sec:rank dropping}

Given that SVD employed in DRM factorizes the concatenated weight deltas into a spectrum of singular values, it is natural to consider the effect of rank truncation for the possibility of further parameter interference reduction and improved merging performance. We explore dropping small singular values (i.e. correspond to dropping less important singular vectors) to demonstrate the effect of rank truncation. After applying SVD to $\Delta W_l^{\text{stack}}$, we retain only the top $(1 - k) \cdot r$ ranks, where $r$ is the rank of $\Delta W_l^{\text{stack}}$, and $k$ is the rank drop rate; before proceeding with the merging procedure. As show in Figure \ref{fig:rank_truncation}, the performance remains relatively stable, and only begins to degrade noticeably after $k$ is greater than 0.8. However, as truncating ranks does not result in distinct performance increment, we did not employ this procedure in DRM. The absence of improvements may indicate that the lower-ranked singular vectors, while contributing less individually, still capture valuable information; or that the interference caused by these components is negligible. Future work could explore alternative methods in manipulating the less important singular vectors for potential interference reduction.

\begin{figure}
    \centering
    \includegraphics[width=1.0\linewidth]{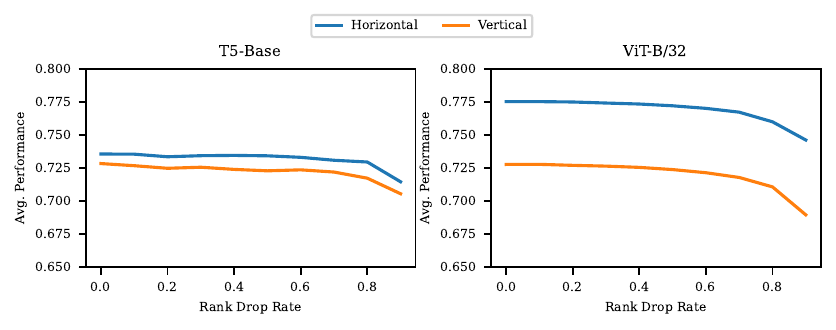}
    \caption{Performance of T5-Base and ViT-B/32 when dropping small singular values. A rank drop rate of $k$ means retaining only the top $(1 - k)$ portion of the singular values before renormalization (multiplying vector norms back into the singular values during renormalization could cause changes in singular value ordering). We see that the performance remains relatively stable, with only a slight decreasing trend. The performance only begins to degrade noticeably after $k$ is greater than 0.8.}
    \label{fig:rank_truncation}
\end{figure}

\subsection{Ablation of Interference Reduction Components}
\label{sec:ablation of interference reduction components}

To evaluate the contribution of each interference reduction technique \citep{tiesmerging} employed in DRM-H, we perform an ablation analysis by removing each component individually and evaluate the resulting performance. We remove: (1) pruning (2) sign election (3) disjoint averaging (replaced with simple averaging). Table \ref{tab:ablation of interference reduction components} shows the results of this analysis.

\begin{wraptable}{r}{0.5\textwidth}
    \centering
    \caption{Ablation of interference reduction techniques employed in DRM-H}
    \begin{tabular}{@{}lcc@{}}
        \toprule
        \textbf{Method} & \textbf{ViT-B/32} & \textbf{T5-Base} \\
        \midrule
        \textsc{DRM-H} & \textbf{77.5} & 73.6 \\
        \midrule
        \quad-- Prune & 74.9 & 72.4 \\
        \quad-- Elect Sign & 76.6 & \textbf{76.7} \\
        \quad-- Disjoint Average & 62.9 & 57.5 \\
        \bottomrule
    \end{tabular}
    \label{tab:ablation of interference reduction components}
\end{wraptable}

We recognize that disjoint averaging is the most critical component of DRM-H, removing this technique causes significant performance drops in both ViT-B/32 and T5-Base up to $16.1\%$ absolutely. Surprisingly, we also find that merging without sign election improves the merging performance by $3.1\%$ on T5-Base, this result warrants further investigation in future work.

\subsection{Justification of Joint Pruning}
In the pruning step, two possible approaches can be made: (1) joint pruning: concatenate singular vector matrices (i.e. $\widetilde{V}_t$ for DRM-H and $\widetilde{U}_t$ for DRM-V) across all tasks before sorting the entries for top-$k\%$ and then prune the rest. (2) individual pruning: prune each singular vector matrix separately. To justify our design choice, we summarize the comparison between these two approaches in Table~\ref{tab:joint_and_indiv_prune}.

\begin{table}[h]
    \caption{Comparison of merging results with joint and individual pruning.}
    \centering
    \begin{tabular}{ccccc}
        \toprule
        Method & Trim & ViT-B/32 & T5-Base & DeB-Base \\
        \midrule
        \multirow{2}{*}{DRM-H} & Individual & 77.4 & 73.6 & 66.0 \\
        & Joint & 77.5 & 73.6 & 66.9 \\
        \midrule
        \multirow{2}{*}{DRM-V} & Individual & 72.8 & 72.5 & 53.2  \\
        & Joint & 72.8 & 72.8 & 54.1 \\
        
        \bottomrule
    \end{tabular}
    \label{tab:joint_and_indiv_prune}
\end{table}

In ViT-B/32 and T5-Base, the performances between joint and individual pruning are comparable, with joint pruning performing slightly better or equal for both DRM-H and DRM-V. On the other hand, DeBerta-Base is more sensitive, with joint pruning achieving 0.9\% absolute improvement compared to individual pruning when merging with either DRM-H or DRM-V. Motivated by the improvement, our method employs joint pruning.

\subsection{Hyperparameter Sensitivity}
\label{sec:hyperparameter sensitivity}
To assess the robustness of DRM, we conduct a sensitivity analysis to evaluate how the performance of merging techniques varies with different hyperparameter settings, when merging ViT-B/32, T5-Base, and Llama3.1-8B. We focus on two hyperparameters: (1) Pruning retention rate $k$. (2) Merging Coefficient $\lambda$.

We report the result in Figure \ref{fig:hyperparameter sensitivity}. The analysis suggests that DRM is generally more robust to hyperparameter variations within a reasonable range, compared to TIES Merging and DARE-TIES, especially when merging ViT-B/32 and T5-Base, but careful tuning is still crucial for enhancing the merging results. This information can guide future selection of appropriate hyperparameter values for DRM. The \emph{accuracy reported here were measured on validation set}, reflecting the hyperparameter tuning search.

\begin{figure}
    \centering
    \includegraphics[width=1.0\linewidth]{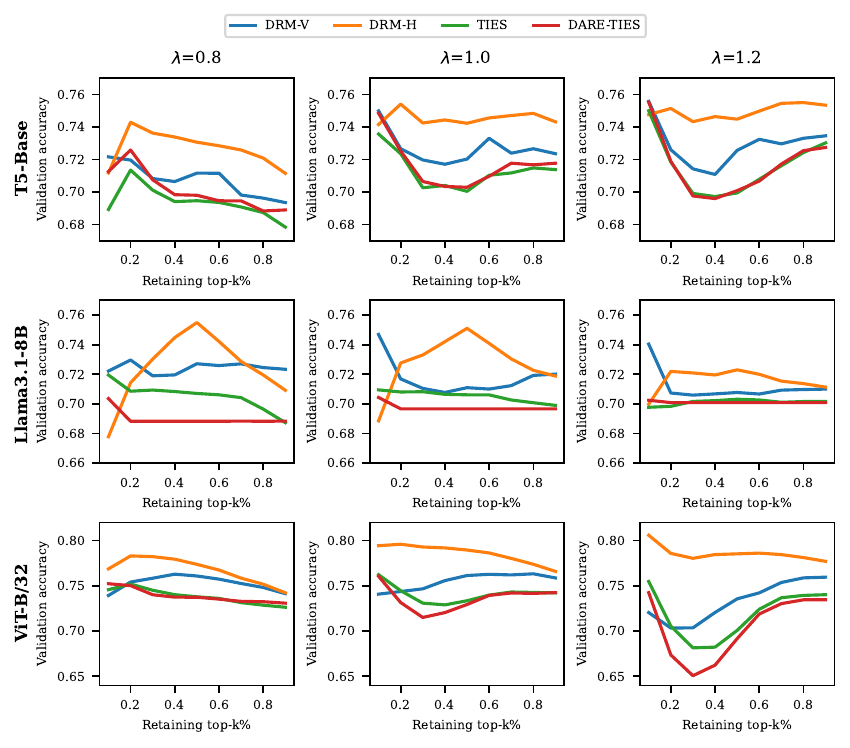}
    \caption{Hyperparameter sensitivity when merging ViT-B/32, T5-Base, and Llama3.1-8B. We fix merging coefficient $\lambda$ to one of $\{0.8, 1.0, 1.2\}$, and try varying the parameter retention rate $k$ (inverse of pruning rate). We see that the performance of both DRM-H and DRM-V is generally more robust to hyperparameter variations within a reasonable range, compared to TIES Merging and DARE-TIES. Validation set accuracy is reported here, to reflect the hyperparameter tuning search.}
    \label{fig:hyperparameter sensitivity}
\end{figure}

\subsection{Hyperparameter Tuning}
\label{sec:hyperparameter tuning}

To ensure fair and optimal comparisons, we report in Section \ref{sec:experiments} both merging performance on default hyperparameters, and performance using hyperparameters tuned via grid search on held-out validation sets described in Section \ref{sec:training details}. This section details the hyperparameter search spaces.

For each merging technique, we tuned the following hyperparameters:
\begin{itemize}
    \item \textbf{Task Arithmetic:} The merging coefficient $\lambda$.
    \item \textbf{TIES Merging:} Prune rate, and merging coefficient $\lambda$.
    \item \textbf{DARE-TIES:} Prune rate, and merging coefficient $\lambda$.
    \item \textbf{DRM-H \& DRM-V:} Prune rate, and merging coefficient $\lambda$.
\end{itemize}
Following \citet{taskarithmetic, tangenttaskarithmetic}, every method uses a single merging coefficient $\lambda$ value shared across every task.

For Task Arithmetic, we explored a merging coefficient search space $\lambda \in \{ 0.1, 0.2, 0.3, \dots ,1.0\}$. For TIES Merging, DARE-TIES, DRM-H and DRM-V, we specify the search space for merging coefficient $\lambda \in \{ 0.8, 0.9, 1.0, \dots , 1.5\}$; and prune rate $k \in \{ 0.1, 0.2, 0.3, \dots , 1.0\}$, the range is selected to be exhaustive regardless of whether the pruning is defined by prune rate or retention rate. Notice that the Task Arithmetic employs $\lambda$ as a coefficient for weighted sum, while other methods utilize $\lambda$ on top of averaging. Therefore, their search spaces cover different scales.

\subsection{Per-task Performance Breakdown}
\label{sec:performance breakdown}

To provide a more detailed analysis of the performance of our proposed DRM method, this section presents a breakdown of the results for each individual task in our evaluation suite. Complementing the aggregate performance presented in Section \ref{sec:experiments}. All the reported results are classification accuracy.

Specifically, refer to
\begin{itemize}
    \item Table~\ref{tab:vit-base performance breakdown} for per-task performance of ViT-B/32,
    \item Table~\ref{tab:vit-large performance breakdown} for per-task performance of ViT-L/14,
    \item Table~\ref{tab:t5-base performance breakdown} for per-task performance of T5-Base,
    \item Table~\ref{tab:t5-large performance breakdown} for per-task performance of T5-Large,
    \item Table~\ref{tab:deberta-base performance breakdown} for per-task performance of DeBERTaV3-Base,
    \item Table~\ref{tab:llama performance breakdown} for per-task performance of Llama3.1-8B.
\end{itemize}

\section{Training Details}
\label{sec:training details}
\subsection{Vision Models}
Vision models employed in our study are CLIP ViT-B/32 and CLIP ViT-L/14, each with a linear classification head attached for each task. Both the models and classifiers were provided by \citet{taskarithmetic} \footnote{\url{https://github.com/mlfoundations/task_vectors}}. The checkpoints follow the OpenCLIP implementation \citep{openclipsoftware}, where the self-attention's query, key, and value projection weights are stacked into a single weight matrix. We found no discernible performance difference on applying our method to the stacked QKV versus each projection weight separately. Hence, we apply our methods on the stacked matrix to keep the implementation straightforward.

\subsubsection{Datasets}
For vision tasks, we used the following datasets: MNIST \citep{mnist} under Gnu General Public License. EuroSAT \citep{eurosat} under MIT License. Cars \citep{stanfordcars} and GTSRB \citep{gtsrb} under Creative Commons License. We could not find the license information of DTD \citep{dtd}, RESISC45 \citep{resisc45}, SUN397 \citep{sun397} and SVHN \citep{svhn}. We attempted to reproduce the exact splits for EuroSAT and RESISC45 from \citet{taskarithmetic} but were unable to do so, and instead followed the dataset sizes reported in that work.
 
\subsection{Language Models}
We chose the two variants of encoder-decoder models T5-Base\footnote{\url{https://huggingface.co/google-t5/t5-base}} and T5-Large\footnote{\url{https://huggingface.co/google-t5/t5-large}}, as well as another encoder-only model DeBERTaV3-Base\footnote{\url{https://huggingface.co/microsoft/deberta-v3-base}}.

\subsubsection{Finetuning hyperparameters}
The model were finetuned through Hugging Face's Transformers library (\emph{-ForSequenceClassification} class family), using the AdamW \citep{adamw} optimizer with a learning rate of 1e-4, a batch size of 1024 samples, and a maximum of 50,000 optimization steps with early stopping enabled. Following \citet{tiesmerging}, we did not use learning rate scheduling and weight decay in our experiments, as they could introduce additional confounding factors. Bfloat16 precision was enabled for better training efficiency. A single A100 80GB GPU was used for training.

\subsubsection{Datasets}
We employed a diverse set of datasets, covering a range of natural language understanding tasks. Datasets were accessed via Hugging Face's Datasets. For datasets where validation sets were not provided, we split $10\%$ of the training set with random seed 0 to create a validation set. The following datasets were used:

\textbf{PAWS}\citep{paws}\footnote{\url{https://huggingface.co/datasets/google-research-datasets/paws}}: we used the original \textit{labeled} split.

\textbf{QASC}\citep{qasc}\footnote{\url{https://huggingface.co/datasets/allenai/qasc}}: with CC-BY 4.0 license. We used a total of 8,134 training samples, 926 validation samples, and 920 test samples.

\textbf{QuaRTz}\citep{quartz}\footnote{\url{https://huggingface.co/datasets/allenai/quartz}}: with CC-BY 4.0 license. We used the original split.

\textbf{StoryCloze}\citep{storycloze}\footnote{\url{https://huggingface.co/datasets/LSDSem/story_cloze}} comes with \textit{2016} and \textit{2018} splits. We used validation set of \textit{2018} split for training, and leave the \textit{2016} validation and test sets as they are.

\textbf{WikiQA}\citep{wikiqa}\footnote{\url{https://huggingface.co/datasets/microsoft/wiki_qa}}: we used the original split.

\textbf{WinoGrande}\citep{winogrande}\footnote{\url{https://huggingface.co/datasets/allenai/winogrande}}: we used the \textit{winogrande\_l} training split, and partition 10\% of it for validation. The original validation set was used for test instead. The final split consisted of 9,210 training samples, 1,024 validation samples, and 1,267 test samples.

Moreover, we attempted with finetuning on WSC \citep{wsc}, using the \textit{wsc.fixed} subset of the SuperGLUE benchmark \citep{superglue}. We tried splitting the training set to create a held-out validation set. However, we failed to train WSC models to convergence, so they were not included in our experiments.

\subsection{Large Language Models}
For LLMs, we finetuned Llama3.1-8B\footnote{\url{https://huggingface.co/meta-llama/Llama-3.1-8B}} using the \emph{LlamaForSequenceClassification} class from Hugging Face's Transformers library. Due the the high cost of training LLMs, we opted for LoRA \citep{lora} through the PEFT library, instead of full finetuning.

\subsubsection{Finetuning hyperparameters}
The models were finetuned with a low learning rate of 3e-5, and a 1e-5 weight decay (we observed that using no weight decay at all hurts the finetuned performance considerably) via the AdamW optimizer. The training was done in a distributed fashion with an effective batch size of 256 samples, limited to a maximum of 2,000 total steps, with early stopping enabled, 5\% learning rate warm up steps, and 1024 tokens context length. We applied LoRA adapters with rank 16 to every linear layer of the self-attention modules, and set the alpha to 32. Bfloat16 datatype was used together with Flash-Attention2 \citep{flashattention2} in order to minimize memory usage and speed up the training cycles. Additional performance enhancements included the use of Liger kernel \citep{ligerkernel} and PyTorch's \textit{compile} utility. We conducted the training using 4 A100 GPUs, each with 80 GBs of VRAM.

\subsubsection{Datasets}

We evaluated our approach on the following datasets from the GLUE benchmark \citep{glue}: QNLI\footnote{\url{https://huggingface.co/datasets/nyu-mll/glue/viewer/qnli}}, MNLI\footnote{\url{https://huggingface.co/datasets/nyu-mll/glue/viewer/mnli}}, RTE\footnote{\url{https://huggingface.co/datasets/nyu-mll/glue/viewer/rte}}, SST-2\footnote{\url{https://huggingface.co/datasets/nyu-mll/glue/viewer/sst2}}, and CoLA\footnote{\url{https://huggingface.co/datasets/nyu-mll/glue/viewer/cola}}, chosen to cover a range of natural language understanding tasks. Since every dataset here lacks publicly available test labels, we used the original validation sets for testing, and created new validation splits by reserving $10\%$ of the training data, except for SST-2 and CoLA, which use only 5\%, using stratified sampling with a fixed random seed of 0. The dataset statistics are summarized in Table \ref{tab:llm datasets}.

\begin{table}[h]
    \centering
    \caption{LLM Dataset Statistics}
    \label{tab:llm datasets}
    \begin{tabular}{lcccc}
        \toprule
        Dataset & Task & Training Size & Validation Size & Test Size \\
        \midrule
        QNLI \citep{glue} & Question-Answering & 94,268 & 10,475 & 5,463 \\
        MNLI \citep{mnli} & Natural Language Inference & 353,431 & 39,271 & 9,815 \\
        RTE \citep{glue} & Textual Entailment & 2,241 & 249 & 277 \\
        SST-2 \citep{sst2} & Sentiment Analysis & 63,981 & 3,368 & 872 \\
        CoLA \citep{cola} & Linguistic Acceptability & 7,695 & 856 & 1,043\\
        \bottomrule
    \end{tabular}
\end{table}

\begin{table}[p]
    \centering
    \caption{Break down of ViT-B/32 merging performance.}
    \small
    \setlength{\tabcolsep}{3pt}
    \begin{tabular}{l c c c c c c c c c c}
        \toprule
        && \multicolumn{8}{c}{\textbf{Datasets}} \\
        \cmidrule(lr){3-11}
        \textbf{Methods} & \textbf{Validation} & Cars & DTD & EuroSAT & GTSRB & MNIST & RESISC45 & SUN397 & SVHN & \textbf{Average} \\
        \midrule
        Finetuned & & 77.7 & 79.6 & 99.7 & 98.7 & 99.7 & 98.5 & 75.0 & 97.5 & \textbf{90.8}\\
        \midrule
        Simple Averaging & $\times$ & 63.4 & 50.4 & 72.8 & 52.8 & 87.5 & 72.5 & 65.0 & 64.1 & \textbf{66.1} \\
        Task Arithmetic & $\times$ & 40.9 & 42.8 & 65.3 & 66.0 & 98.1 & 54.3 & 36.5 & 80.6 & \textbf{60.6} \\
        TIES Merging & $\times$ & 58.8 & 54.3 & 78.6 & 72.1 & 98.3 & 72.4 & 59.4 & 86.2 & \textbf{72.5} \\
        Dare-TIES & $\times$ & 55.8 & 52.8 & 78.3 & 71.3 & 98.4 & 69.6 & 56.4 & 86.3 & \textbf{71.1} \\
        \rowcolor[HTML]{E3ECFA}
        DRM-H & $\times$ &64.7 & 59.7 & 89.3 & 75.9 & 98.7 & 81.8 & 65.8 & 84.4 & \textbf{77.5} \\
        \rowcolor[HTML]{F0E3FA}
        DRM-V & $\times$ &56.1 & 53.6 & 80.0 & 74.5 & 98.5 & 73.9 & 59.0 & 86.4 & \textbf{72.8} \\
        \midrule
        Task Arithmetic & \checkmark & 62.0 & 52.1 & 78.3 & 64.9 & 94.0 & 73.4 & 63.5 & 74.4 & \textbf{70.3} \\
        TIES Merging & \checkmark & 61.8 & 57.1 & 75.1 & 73.1 & 98.4 & 77.6 & 63.8 & 87.3 & \textbf{74.3} \\
        Dare-TIES & \checkmark & 63.9 & 56.6 & 76.0 & 72.2 & 97.8 & 78.3 & 65.6 & 85.1 & \textbf{74.4} \\
        \rowcolor[HTML]{E3ECFA}
        DRM-H & \checkmark & 65.4 & 60.6 & 87.4 & 76.3 & 99.0 & 83.1 & 67.1 & 87.8 & \textbf{78.3} \\
        \rowcolor[HTML]{F0E3FA}
        DRM-V & \checkmark & 62.8 & 55.0 & 83.2 & 74.5 & 98.3 & 75.1 & 62.6 & 85.8 & \textbf{74.6} \\
        \bottomrule
    \end{tabular}
    \label{tab:vit-base performance breakdown}
\end{table}

\begin{table}[p]
    \centering
    \caption{Break down of ViT-L/14 merging performance.}
    \small
    \setlength{\tabcolsep}{3pt}
    \begin{tabular}{l c c c c c c c c c c}
        \toprule
        & \multicolumn{8}{c}{\textbf{Datasets}} \\
        \cmidrule(lr){3-11}
        \textbf{Methods} & \textbf{Validation} & Cars & DTD & EuroSAT & GTSRB & MNIST & RESISC45 & SUN397 & SVHN & \textbf{Average} \\
        \midrule
        Finetuned & & 92.3 & 84.4 & 99.9  & 99.2 & 99.7 & 99.0 & 81.9 & 98.1 & \textbf{94.3} \\
        \midrule
        Simple Averaging & $\times$ & 81.6 & 62.7 & 91.1 & 70.6 & 97.0 & 83.7 & 71.9 & 78.2 & \textbf{79.6} \\
        Task Arithmetic & $\times$ & 79.2 & 64.3 & 90.5 & 86.6 & 99.1 & 85.6 & 72.3 & 89.2 & \textbf{83.4} \\
        TIES Merging & $\times$ & 84.9 & 68.8 & 95.3 & 83.3 & 99.0 & 91.2 & 76.1 & 90.3 & \textbf{86.1} \\
        Dare-TIES & $\times$ & 84.3 & 68.2 & 95.0 & 83.6 & 99.1 & 90.8 & 75.8 & 90.6 & \textbf{85.9} \\
        \rowcolor[HTML]{E3ECFA}
        DRM-H & $\times$ & 86.7 & 71.9 & 96.5 & 89.3 & 99.2 & 93.4 & 76.3 & 90.7 & \textbf{88.0} \\
        \rowcolor[HTML]{F0E3FA}
        DRM-V & $\times$ & 83.6 & 68.3 & 95.5 & 86.5 & 99.2 & 90.6 & 75.3 & 91.3 & \textbf{86.3} \\
        \midrule
        Task Arithmetic & \checkmark & 82.1 & 65.8 & 93.1 & 86.8 & 98.9 & 88.1 & 73.8 & 87.9 & \textbf{84.6} \\
        TIES Merging & \checkmark & 85.8 & 70.6 & 95.1 & 84.1 & 99.2 & 91.9 & 76.7 & 91.5 & \textbf{86.9} \\
        Dare-TIES & \checkmark & 85.7 & 70.2 & 95.2 & 84.5 & 99.1 & 91.9 & 76.6 & 91.3 & \textbf{86.8} \\
        \rowcolor[HTML]{E3ECFA}
        DRM-H & \checkmark & 87.8 & 75.2 & 96.6 & 91.9 & 99.2 & 94.7 & 77.1 & 91.6 & \textbf{89.3} \\
        \rowcolor[HTML]{F0E3FA}
        DRM-V & \checkmark & 84.0 & 67.1 & 95.9 & 87.5 & 99.2 & 90.4 & 75.1 & 91.9 & \textbf{86.4} \\
        \bottomrule
    \end{tabular}
    \setlength{\tabcolsep}{6pt}
    \label{tab:vit-large performance breakdown}
\end{table}

\begin{table}[p]
    \centering
    \caption{Break down of T5-Base merging performance.}
    \small
    \setlength{\tabcolsep}{5pt}
    \begin{tabular}{l c c c c c c c c }
        \toprule
        && \multicolumn{7}{c}{\textbf{Datasets}} \\
        \cmidrule(lr){3-9}
        \textbf{Methods} & \textbf{Validation} & PAWS & QASC & Quartz & StoryCloze & WikiQA & WinoGrande & \textbf{Average} \\
        \midrule
        Finetuned & & 94.9 & 99.5 & 75.0 & 86.1 & 96.0 & 59.0 & \textbf{85.1} \\
        \midrule
        Simple Averaging & $\times$ & 58.9 & 54.8 & 54.6 & 65.2 &95.3 & 51.2 & \textbf{63.3} \\
        Task Arithmetic & $\times$ & 84.1 & 90.6 & 59.7 & 72.6 & 95.1 & 56.6 & \textbf{76.4} \\
        TIES Merging & $\times$ & 87.1 & 63.9 & 63.4 & 63.7 & 95.3 & 57.2 & \textbf{71.8} \\
        Dare-TIES & $\times$ & 87.2 & 65.7 & 63.7 & 63.9 & 95.1 & 58.3 & \textbf{72.3} \\
        \rowcolor[HTML]{E3ECFA}
        DRM-H & $\times$ & 87.8 & 74.1 & 59.2 & 67.8 & 95.2 & 57.3 & \textbf{73.6} \\
        \rowcolor[HTML]{F0E3FA}
        DRM-V & $\times$ & 87.5 & 75.1 & 60.8 & 61.9 & 95.0 & 56.7 & \textbf{72.8} \\
        \midrule
        Task Arithmetic & \checkmark & 84.1 & 90.6 & 59.7 & 72.6 & 95.1 & 56.6 & \textbf{76.4} \\
        TIES Merging & \checkmark & 87.6 & 71.5 & 63.3 & 67.0 & 95.2 & 57.8 & \textbf{73.7} \\
        Dare-TIES & \checkmark & 86.8 & 77.2 & 63.3 & 68.0 & 95.2 & 57.2 & \textbf{74.6} \\
        \rowcolor[HTML]{E3ECFA}
        DRM-H & \checkmark & 87.1 & 78.2 & 60.2 & 67.0 & 95.3 & 58.6 & \textbf{74.4} \\
        \rowcolor[HTML]{F0E3FA}
        DRM-V & \checkmark & 86.8 & 88.7 & 61.5 & 62.5 & 95.0 & 56.3 & \textbf{75.1} \\
        \bottomrule
    \end{tabular}
    \setlength{\tabcolsep}{6pt}
    \label{tab:t5-base performance breakdown}
\end{table}

\begin{table}[p]
    \centering
    \caption{Break down of T5-Large merging performance.}
    \small
    \setlength{\tabcolsep}{5pt}
    \begin{tabular}{l c c c c c c c c }
        \toprule
        && \multicolumn{7}{c}{\textbf{Datasets}} \\
        \cmidrule(lr){3-9}
        \textbf{Methods} & \textbf{Validation} & PAWS & QASC & Quartz & StoryCloze & WikiQA & WinoGrande & \textbf{Average} \\
        \midrule
        Finetuned & & 94.5 & 99.5 & 86.2 & 90.4 & 96.0 & 69.0 & \textbf{89.3} \\
        \midrule
        Simple Averaging & $\times$ & 79.6 & 73.1 & 66.7 & 69.6 & 95.3 & 55.0 & \textbf{73.2} \\
        Task Arithmetic & $\times$ & 82.1 & 93.3 & 71.4 & 77.5 & 93.0 & 63.3 & \textbf{80.1} \\
        TIES Merging & $\times$ & 83.5 & 63.2 & 65.1 & 64.7 & 91.6 & 65.9 & \textbf{72.3} \\
        Dare-TIES & $\times$ & 83.9 & 62.8 & 65.4 & 65.0 & 91.4 & 65.4 & \textbf{72.3} \\
        \rowcolor[HTML]{E3ECFA}
        DRM-H & $\times$ & 86.0 & 73.1 & 66.6 & 63.1 & 93.5 & 64.6 & \textbf{74.5} \\
        \rowcolor[HTML]{F0E3FA}
        DRM-V & $\times$ & 86.4 & 68.7 & 62.8 & 63.0 & 93.5 & 65.3 & \textbf{73.3} \\
        \midrule
        Task Arithmetic & \checkmark & 82.1 & 93.3 & 71.4 & 77.5 & 93.0 & 63.3 & \textbf{80.1} \\
        TIES Merging & \checkmark & 82.6 & 91.4 & 71.2 & 73.7 & 92.5 & 65.8 & \textbf{79.5} \\
        Dare-TIES & \checkmark & 83.1 & 91.2 & 70.7 & 73.4 & 92.9 & 65.7 & \textbf{79.5} \\
        \rowcolor[HTML]{E3ECFA}
        DRM-H & \checkmark & 85.2 & 72.7 & 67.7 & 67.0 & 93.5 & 64.5 & \textbf{75.1} \\
        \rowcolor[HTML]{F0E3FA}
        DRM-V & \checkmark & 85.2 & 86.8 & 65.9 & 68.6 & 93.1 & 64.8 & \textbf{77.4} \\
        \bottomrule
    \end{tabular}
    \setlength{\tabcolsep}{6pt}
    \label{tab:t5-large performance breakdown}
\end{table}

\begin{table}[p]
    \small
    \setlength{\tabcolsep}{5pt}
    \centering
    \caption{Break down of DeBERTa-Base merging performance.}
    \begin{tabular}{l c c c c c c c c }
        \toprule
        && \multicolumn{7}{c}{\textbf{Datasets}} \\
        \cmidrule(lr){3-9}
        \textbf{Methods} & \textbf{Validation} & PAWS & QASC & Quartz & StoryCloze & WikiQA & WinoGrande & \textbf{Average} \\
        \midrule
        Finetuned & & 94.5 & 99.4 & 90.6 & 52.2 & 94.7 & 68.9 & \textbf{83.4} \\
        \midrule
        Simple Averaging & $\times$ & 53.4 & 18.5 & 75.3 & 49.7 & 94.7 & 54.2 & \textbf{57.6} \\
        Task Arithmetic & $\times$ & 45.1 & 25.5 & 73.6 & 49.2 & 83.2 & 55.5 & \textbf{55.4} \\
        TIES Merging & $\times$ & 50.2 & 47.7 & 62.3 & 51.1 & 26.9 & 54.0 & \textbf{48.7} \\
        Dare-TIES & $\times$ & 49.1 & 46.3 & 60.6 & 51.2 & 23.4 & 53.5 & \textbf{47.3} \\
        \rowcolor[HTML]{E3ECFA}
        DRM-H & $\times$ & 84.0 & 58.6 & 77.4 & 51.4 & 72.4 & 57.3 & \textbf{66.9} \\
        \rowcolor[HTML]{F0E3FA}
        DRM-V & $\times$ & 61.7 & 44.0 & 55.1 & 53.5 & 56.0 & 54.3 & \textbf{54.1} \\
        \midrule
        Task Arithmetic & \checkmark & 50.3 & 26.8 & 77.2 & 49.7 & 88.7 & 55.8 & \textbf{58.1} \\
        TIES Merging & \checkmark & 56.6 & 36.2 & 71.7 & 49.5 & 71.5 & 57.7 & \textbf{57.2} \\
        Dare-TIES & \checkmark & 57.7 & 34.9 & 71.6 & 50.3 & 73.9 & 58.5 & \textbf{57.8} \\
        \rowcolor[HTML]{E3ECFA}
        DRM-H & \checkmark & 84.3 & 55.1 & 76.2 & 51.4 & 84.9 & 55.4 & \textbf{67.9} \\
        \rowcolor[HTML]{F0E3FA}
        DRM-V & \checkmark & 76.6 & 50.9 & 55.9 & 51.5 & 81.7 & 50.3 & \textbf{61.1} \\
        \bottomrule
    \end{tabular}
    \label{tab:deberta-base performance breakdown}
\end{table}

\begin{table}[p]
    \centering
    \caption{Break down of Llama3.1-8B merging performance.}
    \begin{tabular}{l c c c c c c c}
        \toprule
        && \multicolumn{6}{c}{\textbf{Datasets}} \\
        \cmidrule(lr){3-8}
        \textbf{Methods} & \textbf{Validation} & COLA & SST2 & MNLI & QNLI & RTE & \textbf{Average} \\
        \midrule
        Finetuned & & 83.4 & 96.6 & 91.4 & 95.2 & 85.9 & \textbf{90.5} \\
        \midrule
        Simple Averaging & $\times$ & 72.8 & 61.9 & 47.5 & 61.0 & 64.3 & \textbf{61.5} \\
        Task Arithmetic & $\times$ & 43.8 & 49.8 & 33.1 & 50.7 & 52.4 & \textbf{45.9} \\
        TIES Merging & $\times$ & 70.7 & 51.4 & 76.5 & 53.5 & 83.4 & \textbf{67.1} \\
        Dare-TIES & $\times$ & 70.1 & 51.2 & 79.1 & 51.9 & 85.9 & \textbf{67.6} \\
        \rowcolor[HTML]{E3ECFA}
        DRM-H & $\times$ & 72.1 & 55.7 & 71.5 & 63.4 & 84.8 & \textbf{69.5} \\
        \rowcolor[HTML]{F0E3FA}
        DRM-V & $\times$ & 70.1 & 51.2 & 85.2 & 53.2 & 85.9 & \textbf{69.1} \\
        \midrule
        Task Arithmetic & \checkmark & 69.1 & 50.9 & 80.8 & 50.5 & 80.5 & \textbf{66.4} \\
        TIES Merging & \checkmark & 72.7 & 51.8 & 81.0 & 56.2 & 85.6 & \textbf{69.4} \\
        Dare-TIES & \checkmark & 69.6 & 50.9 & 86.3 & 50.6 & 86.6 & \textbf{68.8} \\
        \rowcolor[HTML]{E3ECFA}
        DRM-H & \checkmark & 73.2 & 63.1 & 74.7 & 70.7 & 84.8 & \textbf{73.3} \\
        \rowcolor[HTML]{F0E3FA}
        DRM-V & \checkmark & 73.9 & 68.5 & 74.3 & 75.9 & 79.1 & \textbf{74.3} \\
        \bottomrule
    \end{tabular}
    \label{tab:llama performance breakdown}
\end{table}

\end{document}